%% file: main.tex
\RequirePackage{fix-cm}

\documentclass{ieeeaccess}
\usepackage{cite}
\usepackage{amsmath,amssymb,amsfonts,amsthm}
\usepackage{algorithm}
\usepackage{algorithmic}
\usepackage{graphicx}
\usepackage{textcomp}
\usepackage{booktabs}
\usepackage{multirow}
\usepackage{url}
\usepackage{hyperref}
\usepackage{bm}

\usepackage{silence}

\makeatletter
\AtBeginDocument{\DeclareMathVersion{bold}
\SetSymbolFont{operators}{bold}{T1}{times}{b}{n}
\SetSymbolFont{NewLetters}{bold}{T1}{times}{b}{it}
\SetMathAlphabet{\mathrm}{bold}{T1}{times}{b}{n}
\SetMathAlphabet{\mathit}{bold}{T1}{times}{b}{it}
\SetMathAlphabet{\mathbf}{bold}{T1}{times}{b}{n}
\SetMathAlphabet{\mathtt}{bold}{OT1}{pcr}{b}{n}
\SetSymbolFont{symbols}{bold}{OMS}{cmsy}{b}{n}
\renewcommand\boldmath{\@nomath\boldmath\mathversion{bold}}}
\makeatother

\newtheorem{proposition}{Proposition}
\newtheorem{lemma}{Lemma}

\def\BibTeX{{\rm B\kern-.05em{\sc i\kern-.025em b}\kern-.08em
    T\kern-.1667em\lower.7ex\hbox{E}\kern-.125emX}}

\begin{document}
\history{Accepted for publication in IEEE Access.}
\doi{10.1109/ACCESS.2026.3687163}

\title{DC-Ada: Reward-Only Decentralized Sensor Adaptation for Heterogeneous Multi-Robot Teams}
%DC-Ada: Reward-Only Decentralized Observation-Interface Adaptation for Heterogeneous Multi-Robot Teams}

%\author{\uppercase{Saad Alqithami}\authorrefmark{1}}
%\address[1]{Affiliation to be updated (e-mail: to be updated)}
\author{\uppercase{Saad Alqithami} \IEEEmembership{Member, IEEE}}
\address{Department of Computer Science, Al-Baha University, Albaha 65779 Saudi Arabia (e-mail: salqithami@bu.edu.sa)}

%\tfootnote{Code and experimental pipeline: \url{https://github.com/alqithami/DC-ADA}. The repository contains the full simulator, training scripts, and reproducibility utilities used to generate the results in this paper.}

\markboth
{Alqithami: DC-Ada for Heterogeneous Multi-Robot Teams}
{Alqithami: DC-Ada for Heterogeneous Multi-Robot Teams}

%\corresp{Corresponding author: Saad Alqithami (e-mail: salqithami@bu.edu.sa).}

\begin{abstract}
Heterogeneity is a defining feature of deployed multi-robot teams: platforms often differ in sensing modalities, ranges, fields of view, and degradation or failure patterns. A persistent deployment challenge is that controllers trained under a nominal sensing configuration can degrade sharply when executed on robots with missing or mismatched sensing, even when the task and action interface remains unchanged. This paper presents \textbf{DC-Ada}, a reward-only decentralized adaptation method that keeps a pretrained \emph{shared policy} frozen and instead adapts compact, per-robot \emph{observation transforms} to reconcile heterogeneous sensing with a fixed inference interface. DC-Ada is gradient-free and communication-minimal: adaptation proceeds via budgeted accept/reject random search, using short common-random-number rollouts to compare candidate perturbations under a strict environment-step budget. We evaluate DC-Ada and four baselines---shared policy, observation normalization, random perturbation, and local fine-tuning---in a lightweight deterministic two-dimensional multi-robot simulator spanning warehouse logistics, search-and-rescue, and collaborative mapping. The main evaluation covers four heterogeneity regimes (H0--H3) and five random seeds under a matched budget of $200{,}000$ joint environment steps per run. We report shaped return, thresholded task completion, continuous progress metrics, threshold-sensitivity analyses, and runtime/scalar-feedback accounting. We further include targeted severe-heterogeneity analyses: H3 design ablations and observation-stress tests in mapping and search-and-rescue, together with a ten-seed clean-H3 evaluation across all three environments. Across domains, heterogeneity substantially affects the frozen shared policy, but no single mitigation strategy dominates every task and metric. Observation normalization is strongest for reward robustness in warehouse logistics and remains competitive in search-and-rescue, while the frozen shared policy is strongest for shaped reward in collaborative mapping, with local fine-tuning close behind. DC-Ada provides a complementary operating point: it improves completion performance most clearly in coverage-based mapping under severe heterogeneity, while requiring only scalar team returns and avoiding policy fine-tuning or persistent message exchange. The H3 ablations indicate that common-random-number evaluation and multi-candidate selection are important for DC-Ada's mapping-completion gains. The observation-stress tests show that these gains persist under added noise, dropout, delay, and mild combined perturbations. The ten-seed clean-H3 evaluation preserves the same domain-specific pattern: mapping remains DC-Ada's clearest completion-oriented advantage, warehouse remains reward-dominated by observation normalization, and search-and-rescue remains method-dependent. These results position DC-Ada as a practical deploy-time observation-interface adaptation mechanism for heterogeneous teams when gradients, privileged state, or high-bandwidth communication are unavailable or undesirable.
\end{abstract}

\begin{keywords}
Multi-robot systems, heterogeneous sensing, observation-interface adaptation, test-time adaptation, reward-only learning, derivative-free optimization, communication constraints, multi-agent reinforcement learning.
\end{keywords}

\titlepgskip=-15pt

\maketitle
% ============================================================

\section{Introduction}
\label{sec:introduction}
Multi-robot teams are increasingly deployed in domains where parallelism, spatial coverage, and redundancy translate directly into mission value, including disaster response and search operations \cite{murphy2019disaster}, autonomous water-quality monitoring \cite{carrillo2022lake}, heterogeneous warehouse logistics \cite{Kang02122025}, and planetary rover exploration \cite{schenker2003planetary}. In these deployments, \emph{heterogeneity} is the norm rather than the exception: robots differ in sensing modalities, sensing range, fields of view, compute budgets, and failure modes. Foundational taxonomies and formal models emphasize that heterogeneous capability fundamentally shapes coordination, task allocation, and robustness \cite{dudek1993taxonomy,gerkey2004formal}. In learning-enabled teams, heterogeneity frequently manifests as asymmetric sensing and partial degradations, producing persistent information asymmetry and deployment-time distribution shift that can invalidate assumptions made during training \cite{bettini2023heterogeneous,gao2023asymmetric,fu2022robust,lin2020multi}.

\subsection{Motivation: the deployment gap under heterogeneous sensing}
A common practice in learning-based multi-robot control is to train a \emph{shared policy} under a nominal sensing configuration (often homogeneous) and then deploy it across a team. This practice leverages shared-policy generalization and simplifies deployment, but it can introduce systematic mismatch when deployed robots exhibit modality loss, reduced range, degraded resolution, or altered sensing geometry. Importantly, the resulting failure is often not a minor reduction in measurement quality but a \emph{semantic} mismatch in the policy input: for example, segments corresponding to missing range measurements may be interpreted as free space, or a controller trained with long-range sensing may behave unsafely when deployed with shortened effective sensing range. These mismatches are particularly consequential in partially observable tasks where the policy implicitly relies on specific cue reliabilities and observation semantics.

Multi-agent reinforcement learning (MARL) provides powerful coordination frameworks across a spectrum of assumptions, including independent learners \cite{matignon2012independent}, centralized training with decentralized execution \cite{lowe2017multi}, and cooperative value factorization \cite{rashid2018qmix}. While these approaches can learn sophisticated behaviors, they commonly assume consistent observation interfaces between training and deployment, and they often benefit from centralized training machinery and/or structured information exchange \cite{gupta2017cooperative,nguyen2020deep}. In addition, many fielded multi-robot systems operate under practical communication constraints: low-bandwidth links and intermittent connectivity make high-rate exchange of raw observations, dense latent messages, or gradients undesirable. This motivates classical and modern work on communication-efficient coordination, spanning consensus and distributed control \cite{choi2009consensus,atanasov2015decentralized}, distributed optimization under bandwidth constraints \cite{marcotte2020optimizing,patwardhan2024distributed}, and event-triggered communication policies \cite{hu2021event}. Learned communication remains an active research direction, beginning with differentiable communication in MARL \cite{foerster2016learning} and extending to attention/message-passing architectures \cite{phan2023attention,zhang2023commnetx}, structured coordination layers \cite{Meng2023DC2Net,Yang2024TeamComm}, and high-level language-augmented planning and coordination \cite{Rajvanshi2025SayCoNav,Rana2023SayPlan,Zhao2024HAS}. Despite this progress, many deployments prefer approaches that do not require sustained high-rate messaging and do not introduce new inter-robot protocols that must be validated.

\subsection{Data-centric adaptation: adapt the interface, not the policy}
This paper targets a practical deployment regime in which (i) the task and action interface remain unchanged, (ii) a high-performing shared policy is already available, and (iii) robots differ in sensing. Rather than adapting the policy parameters---which can be expensive to validate and may raise safety and stability concerns---we pursue a data-centric alternative: \emph{keep the shared policy frozen} and adapt only what each robot feeds into it. Concretely, each robot learns a small local transformation that maps its fixed-layout observation vector into an inference interface that the frozen shared policy can interpret more robustly under heterogeneous sensing.

The approach aligns with modular sim-to-real transfer and test-time adaptation paradigms in robotics, where compact front-end modules are adapted to reconcile distribution shift while downstream controllers remain stable \cite{finn2017model,rusu2017sim2real,kalashnikov2018scalable}. It is also consistent with data-centric AI perspectives that emphasize improving performance by shaping data, representations, and adaptation interfaces rather than repeatedly retraining full policies \cite{zha2025data}. Finally, it admits a reward-only adaptation loop: when gradients or privileged states are unavailable on-device, derivative-free optimization based on scalar rollouts remains applicable \cite{spall1992multivariate,kushner2003stochastic,borkar2008stochastic}. DC-Ada adopts a conservative accept/reject mechanism inspired by random search and evolution strategies \cite{salimans2017es,mania2018simple}, and is interpreted through the lens of bandit/zeroth-order optimization \cite{flaxman2005bandit,nesterov2017random,duchi2015optimal}. At a conceptual level, the interface adapter can be viewed as an engineered analogue of reliability-weighted sensing: when cue reliability changes, biological systems reweight and reshape sensing rather than relearn the entire controller \cite{ernst2002optimal,peterka2002sensorimotor,vanbeers2002feeling}.

Figure~\ref{fig:dcada_overview} summarizes the proposed deployment pattern. Each robot applies a compact local adapter to its fixed-layout observation vector, while the shared policy remains frozen. The only feedback required to drive adaptation is a scalar team-level rollout return, which can be communicated at very low rate (once per rollout), avoiding sustained message passing during execution.

\begin{figure}[t]
\centering
\includegraphics[width=\linewidth]{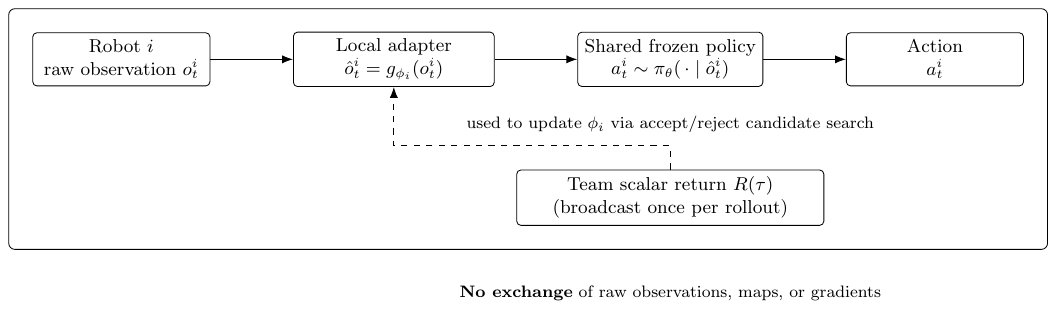}
\caption{DC-Ada deployment and adaptation interface. Each robot adapts only a local observation transform $g_{\phi_i}$ in front of a shared frozen policy $\pi_\theta$. Adaptation uses scalar rollout return broadcasts $R(\tau)$ (once per rollout) and does not require sharing raw observations, maps, or gradients.}
\label{fig:dcada_overview}
\end{figure}

\subsection{Contributions}
We make the following contributions.
First, we formulate heterogeneous-sensing adaptation in cooperative multi-robot teams as learning robot-specific observation transforms in front of a frozen shared policy, and we enforce a fixed-layout observation interface to prevent checkpoint incompatibility and confounded comparisons across heterogeneity levels. Second, we propose \textbf{DC-Ada}, a decentralized reward-only adaptation method that performs conservative accept/reject zeroth-order updates using short common-random-number (CRN) rollouts under a strict environment-step budget. Third, we provide an analysis and design rationale connecting DC-Ada to zeroth-order and bandit optimization, motivating CRN as variance reduction and clarifying the reliability implications of accept/reject selection under bounded returns. Finally, we provide a transparent controlled-simulation evaluation of DC-Ada and four baselines in a lightweight deterministic two-dimensional multi-robot simulator spanning warehouse logistics, search-and-rescue, and collaborative mapping across four heterogeneity regimes (H0--H3) and five seeds. We report reward, success, continuous progress metrics, threshold sensitivity, targeted H3 design ablations, observation-stress tests, an extended ten-seed clean-H3 confirmation across all three environments, and measured runtime/scalar-feedback accounting under a matched $200{,}000$-step budget.

% ============================================================
\section{Related Work}
\label{sec:related_work}
DC-Ada lies at the intersection of heterogeneous multi-robot systems, communication-limited coordination, deploy-time adaptation, and derivative-free optimization. Prior work spans (i) capability-aware modeling and coordination in heterogeneous teams, (ii) consensus, distributed estimation, and communication-efficient mapping, (iii) multi-agent reinforcement learning with and without learned communication, (iv) sim-to-real and data-centric robotics aimed at robustness under distribution shift, and (v) bandit/zeroth-order optimization methods that enable reward-only improvement without gradients. The closest conceptual neighbors to DC-Ada are approaches that seek robustness to sensor mismatch and partial failures while preserving decentralized execution, but DC-Ada differs in that it adapts \emph{only} a compact observation interface using \emph{scalar} return feedback, avoiding both centralized state and persistent inter-agent messaging.

\subsection{Heterogeneity and asymmetric capability in multi-robot teams}
Heterogeneity is intrinsic to multi-robot systems, spanning sensing, actuation, compute, power constraints, and communication capabilities. Early foundational work formalized this diversity by proposing taxonomies of multi-robot teams and studying how capability differences affect task allocation and coordination \cite{dudek1993taxonomy}. Subsequent formal models of multi-robot task allocation and coalition formation emphasized that heterogeneous execution requires explicit reasoning about robot capabilities and constraints \cite{gerkey2004formal}. In modern deployments, heterogeneity frequently manifests as \emph{asymmetric sensing} and partial degradations, where robots receive non-equivalent information about the same underlying world and therefore face different partial observability regimes \cite{bettini2023heterogeneous,gao2023asymmetric}. Such asymmetry creates persistent information imbalance and can induce systematic coordination failures if policies assume homogeneous sensing.

Several strands of work address heterogeneity by explicitly encoding capabilities into coordination mechanisms or by designing robust execution layers. Robust coordination under heterogeneous capabilities has been studied both in learning-free and learning-enabled settings, with an emphasis on ensuring that policy behavior remains valid when sensors are degraded or missing \cite{fu2022robust,lin2020multi}. Planning-based multi-robot coordination has similarly incorporated capability constraints, demonstrating that effective heterogeneous execution often requires structured interfaces between perception, planning, and control \cite{carreno2022planning}. DC-Ada is aligned with this interface-centric viewpoint: rather than learning an entirely new coordination strategy, it seeks to maintain a fixed shared policy by adapting the observation interface to compensate for capability mismatch.

\subsection{Communication-efficient coordination, mapping, and distributed estimation}
Communication constraints are a central theme in multi-robot systems, motivating approaches that reduce either message rate, payload size, or both. Classical coordination relies on consensus and distributed control principles to enable agreement and formation behaviors with limited information exchange \cite{choi2009consensus}. Distributed estimation and active perception frameworks further formalize how teams can maintain consistent beliefs while minimizing communication, particularly when sensing is decentralized and bandwidth is constrained \cite{atanasov2015decentralized}. A complementary line of work in distributed optimization studies when and how to communicate to preserve convergence guarantees, including event-triggered communication policies that send updates only when local changes exceed a threshold \cite{hu2021event,marcotte2020optimizing,patwardhan2024distributed}. Recent advances improve scalability by accelerating consensus dynamics and leveraging graph-structured designs \cite{mahato2023consensus,goarin2024graph}.

In multi-robot mapping and SLAM, communication efficiency is also a driving motivation: modern systems often separate lightweight front-ends from back-end optimization, and distribute computation and communication to reduce bandwidth while preserving map consistency \cite{cieslewski2017efficient,chang2021kimera,chen2022slam}. These approaches typically exchange structured information such as poses, keyframes, loop closures, or submaps. DC-Ada differs fundamentally in the communication primitive: it does not exchange maps, beliefs, or observations. Instead, it leverages only scalar team-level return feedback to adapt local observation interfaces, thereby operating in an extreme communication-minimal regime.

\subsection{Multi-agent reinforcement learning and learned communication}
MARL provides a spectrum of techniques for cooperative control under partial observability and decentralized execution. Independent learners and decentralized actor-critic methods have long been studied as baseline approaches, but they can suffer from non-stationarity and coordination instability in cooperative settings \cite{matignon2012independent}. Deep MARL methods such as MADDPG and cooperative deep RL architectures address coordination through centralized training and decentralized execution (CTDE), enabling agents to learn coordinated behaviors while maintaining decentralized policies at test time \cite{lowe2017multi,gupta2017cooperative}. Value factorization methods such as QMIX provide another CTDE mechanism that decomposes team value functions while preserving decentralized greedy execution \cite{rashid2018qmix}. Surveys emphasize that partial observability, non-stationarity, and credit assignment remain fundamental challenges in multi-agent learning \cite{nguyen2020deep}. Recent heterogeneous-agent MARL methods explicitly relax homogeneous parameter-sharing assumptions by training separate or sequentially updated policies for heterogeneous agents \cite{zhong2024harl}, and modular policy-network approaches similarly encode heterogeneity through learned policy components that specialize by agent, environment, or task \cite{lee2025modularheteromarl}. These methods are important comparators at the level of problem motivation, but they address a different operating regime: they modify the training procedure or policy architecture, whereas DC-Ada assumes the deployed policy is already trained and frozen and adapts only a lightweight observation interface at test time.

Learned communication extends MARL by enabling agents to exchange differentiable messages or learned symbols. Early work learned communication protocols end-to-end \cite{foerster2016learning}, while more recent methods explore message-passing networks and attention-based communication to scale coordination \cite{zhang2023commnetx,phan2023attention}. Additional lines incorporate structured coordination layers and decentralized communication networks \cite{Meng2023DC2Net,Yang2024TeamComm}. Language-augmented planning and communication has also emerged as a high-level coordination paradigm, where agents use natural language or language-like abstractions to coordinate navigation and planning \cite{Rajvanshi2025SayCoNav,Rana2023SayPlan,Zhao2024HAS}. DC-Ada targets a different deployment regime: it assumes persistent message exchange is undesirable or infeasible, and it does not learn a communication protocol. Instead, it adapts a local observation interface using only scalar reward, preserving a fixed shared policy and avoiding explicit message passing during execution.

\subsection{Sim-to-real transfer, data-centric robotics, and personalization}
Bridging the training-to-deployment gap is a longstanding theme in robotics. Meta-learning and model-based adaptation frameworks aim to enable rapid adaptation to new dynamics or sensing conditions \cite{finn2017model}. Modular sim-to-real transfer has emphasized decomposing perception and control into transferable components and adapting representations to cope with domain shift \cite{rusu2017sim2real}. Large-scale robotic learning systems further highlight that robustness often depends as much on data diversity and pipeline design as on algorithmic novelty \cite{kalashnikov2018scalable}. Data-centric AI perspectives similarly shift emphasis toward data curation, representation design, and adaptation mechanisms that maintain performance as conditions evolve \cite{zha2025data}. In distributed settings, federated and personalization approaches study how local updates can be performed under privacy and bandwidth constraints \cite{ho2022federated}. Training-time strategies such as prioritized replay and self-paced curricula improve robustness but typically assume access to full training infrastructure and repeated retraining cycles \cite{schaul2015prioritized,cheng2023prioritized,klink2020selfpaced}.

DC-Ada complements these directions by focusing on \emph{post-training} deploy-time adaptation under a strict interaction budget, with a minimal communication primitive. Rather than adapting the policy itself or retraining with new data, DC-Ada adapts the observation interface to reduce distribution shift for a frozen shared policy, aligning with data-centric intuition that improving input representations can yield practical robustness improvements.

\subsection{Derivative-free and bandit optimization for reward-only adaptation}
Optimizing a system using only scalar returns without gradients is the classical domain of stochastic approximation and simulation optimization \cite{kushner2003stochastic,borkar2008stochastic}. Finite-difference methods and simultaneous perturbation techniques (e.g., SPSA) provide canonical mechanisms for estimating descent directions under noisy objective evaluations \cite{spall1992multivariate}. Bandit and zeroth-order optimization extend these ideas with convergence and complexity analyses for random-direction estimators under Lipschitz or smoothness assumptions \cite{flaxman2005bandit,nesterov2017random,duchi2015optimal}. In reinforcement learning, evolution strategies and simple random search demonstrate that gradient-free optimization can scale to high-dimensional policies by exploiting shared randomness and low-dimensional broadcasts \cite{salimans2017es,mania2018simple}.

DC-Ada adopts a conservative, selection-based zeroth-order procedure, but differs from ES/ARS in two respects. First, it optimizes only a compact observation-transform module rather than full policy parameters, thereby reducing the effective dimension of the search space and improving stability for deployment. Second, it uses CRN-based candidate comparisons and an accept/reject margin to control noise-driven updates under a fixed interaction budget, aligning the update rule with a safety-oriented deployment perspective.

\subsection{Positioning relative to prior work}
\label{subsec:positioning}
The contribution of DC-Ada is intentionally integrative rather than a claim of a new optimizer or a new MARL training paradigm. Its distinction is the combination of four deployment constraints in a single evaluated pipeline: (i) the shared policy is pretrained and frozen, (ii) heterogeneity is handled through robot-specific observation-interface transforms rather than through policy retraining, (iii) adaptation uses only scalar team-return feedback rather than gradients, privileged state, maps, or raw observations, and (iv) all adaptation rollouts are charged against the same environment-step budget used by the baselines. This separates DC-Ada from training-time heterogeneous-agent MARL, learned-communication methods, policy fine-tuning, and ordinary observation normalization. The method is therefore best viewed as a deploy-time compatibility layer for fixed controllers under sensing mismatch, not as a replacement for richer training-time heterogeneous MARL when such retraining and validation are feasible.

\subsection{Biological motivation: reliability-weighted sensing}
A motivating analogy for DC-Ada is the notion of reliability-weighted sensory integration in biological systems. Human perception and sensorimotor control are often modeled as integrating multiple cues in proportion to their reliability, approaching statistically efficient fusion \cite{ernst2002optimal,vanbeers2002feeling}. Moreover, sensorimotor adaptation to changes in cue reliability has been studied in postural control and related settings, where the nervous system reweights sensory sources when their reliability changes \cite{peterka2002sensorimotor}. While DC-Ada is not a biologically faithful model, the analogy is conceptually useful: heterogeneous robots experiencing modality loss or sensor degradation can be viewed as facing changes in cue reliability, and an interface adapter can be interpreted as reshaping or reweighting cues presented to an otherwise fixed controller. This perspective motivates focusing adaptation on the observation interface rather than retraining the full coordination policy.

%=======================================================
\section{Problem Formulation}
\label{sec:problem_formulation}
We consider a team of $N$ robots interacting with an episodic environment over a finite horizon $T$. The interaction can be modeled as a partially observable Markov game with global state $s_t \in \mathcal{S}$, joint action $a_t=(a_t^1,\dots,a_t^N)\in\mathcal{A}_1\times\cdots\times\mathcal{A}_N$, transition dynamics $p(s_{t+1}\mid s_t,a_t)$, and a shared scalar team reward $r_t=r(s_t,a_t)\in\mathbb{R}$. Each robot executes \emph{decentralized} control: robot $i$ selects $a_t^i$ based only on its local observation history, not on the global state or other robots' raw measurements. We focus on the cooperative setting with a common objective, i.e., the team seeks to maximize expected undiscounted episodic return.

A shared policy $\pi_\theta$ is pretrained offline under nominal sensing (H0) and then \emph{frozen} at deployment. The frozen-policy assumption reflects deployment regimes in which retraining is undesirable or infeasible (e.g., due to limited onboard compute, lack of differentiability through the deployed control stack, or the need to preserve validated coordination behavior). The central challenge is that the observation distribution at deployment may differ systematically from the nominal training regime due to heterogeneous sensing across robots.

\subsection{Heterogeneity as an observation-model shift}
\label{subsec:problem_heterogeneity}
A heterogeneity regime $H$ specifies, for each robot $i$, a sensor configuration that determines which modalities are available and with what capability parameters (e.g., LiDAR ray count and range, camera field-of-view, and noise levels). We model heterogeneity as a shift in the observation model: for a fixed $H$, each robot receives observations generated by an $H$-dependent observation kernel
\begin{equation}
o_t^i \sim \mathcal{O}_{H}^{i}(\,\cdot \mid s_t\,), \qquad i=1,\dots,N,
\label{eq:obs_kernel}
\end{equation}
where $\mathcal{O}_{H}^{i}$ encodes both measurement statistics and measurement \emph{semantics} (e.g., missing modalities, degraded ranges, or altered resolution). To ensure compatibility of pretrained checkpoints and prevent confounds due to changing input dimensionality, we enforce a fixed-layout observation interface within each environment (Sec.~\ref{sec:obs_interface}). Concretely, for each environment, observations are represented as vectors $o_t^i(H)\in\mathbb{R}^{d}$ with a constant feature ordering and constant dimensionality $d$ across H0--H3; heterogeneity affects the \emph{distribution} of these vectors (e.g., segments corresponding to missing modalities are filled with neutral defaults and sensor parameters change the measurement statistics), but not their layout.

\subsection{Data-centric interface adaptation with a frozen shared policy}
\label{subsec:problem_adaptation}
Rather than adapting the shared policy parameters $\theta$, each robot $i$ maintains a lightweight, robot-specific transformation module $g_{\phi_i}:\mathbb{R}^{d}\rightarrow\mathbb{R}^{d}$ that is applied to its local observation before passing the result to the frozen shared policy. The deployed decentralized control law is therefore
\begin{equation}
a_t^i \sim \pi_\theta\!\big(g_{\phi_i}(o_t^i(H))\big), \qquad i=1,\dots,N,
\label{eq:decentralized_control}
\end{equation}
where $\pi_\theta$ is shared across robots and $g_{\phi_i}$ is robot-specific. Let $\phi \triangleq \{\phi_i\}_{i=1}^N$ denote the collection of transform parameters. For a fixed heterogeneity regime $H$, the objective is to select $\phi$ to maximize the expected episodic return under the induced closed-loop system:
\begin{equation}
\max_{\phi} \; J_H(\phi)
\;\triangleq\;
\mathbb{E}\!\left[\sum_{t=1}^{T} r(s_t,a_t)\right],
\label{eq:objective}
\end{equation}
where the expectation is taken over the initial state distribution, environment randomness, and the stochasticity of the policy (if any), under decentralized execution~\eqref{eq:decentralized_control} and $H$-dependent observations~\eqref{eq:obs_kernel}.

\subsection{Resource constraints: interaction budget and minimal communication}
\label{subsec:problem_constraints}
We study the regime of \emph{budgeted} deploy-time adaptation. Each run (environment $\times$ heterogeneity level $\times$ seed) is constrained by a strict interaction budget of $B$ joint environment steps (in our experiments, $B{=}200{,}000$). Budget is counted in environment steps (one step advances all robots jointly), and any additional rollouts used for adaptation must be charged against the same budget. In addition, we consider minimal-communication settings where robots do not exchange raw observations, intermediate features, maps, or gradients; at most, scalar team-level feedback (e.g., episodic return) may be broadcast to support coordination of the adaptation procedure. These constraints reflect practical multi-robot deployments in which bandwidth is limited, privacy or modularity constraints prevent sharing raw sensor streams, and gradient access to the deployed control policy is unavailable.

Under these conditions, the adaptation problem~\eqref{eq:objective} is inherently black-box: the team can observe scalar rollout returns, but does not have access to analytic gradients of $J_H(\phi)$ nor to privileged centralized state. DC-Ada (Sec.~\ref{sec:method}) is designed specifically for this constrained setting by adapting only the observation interface via conservative reward-only candidate selection.

% ============================================================
\section{Fixed Observation Interface and Heterogeneity Design}
\label{sec:obs_interface}

A recurring methodological pitfall in heterogeneous multi-robot evaluation is inadvertently changing the observation dimensionality or feature ordering across heterogeneity levels. Such changes can make pretrained checkpoints incompatible, force ad-hoc architectural modifications, or (worse) lead to silent inconsistencies in what the policy perceives at test time. These issues confound interpretation because performance differences can no longer be attributed solely to sensing heterogeneity; they may instead reflect changes in network input dimensionality, feature semantics, or preprocessing. To prevent this failure mode, we enforce a \emph{fixed-layout} observation interface for every robot \emph{within each environment}, and we hold this interface constant across H0--H3. This design ensures that (i) pretrained shared-policy checkpoints remain compatible across all heterogeneity levels, (ii) heterogeneity corresponds to a controlled \emph{distribution shift} in the observation stream rather than a change in the policy architecture, and (iii) adaptation methods are evaluated on a consistent input representation.

\paragraph{Fixed-layout observation vector.}
For each robot $i$, the observation is a concatenation of normalized kinematic features, environment-specific task features, and three modality segments:
\begin{equation}
o_i = [p(2),\ v(2),\ \text{extra}(k),\ \text{lidar}(16),\ \text{rgb}(32),\ \text{depth}(16)] \in \mathbb{R}^{d}.
\label{eq:fixed_obs_layout}
\end{equation}
The position $p$ and velocity $v$ are normalized to fixed ranges defined by the arena size and action limits. The task-dependent $\text{extra}(k)$ segment includes compact features that are shared across all robots in the same environment (e.g., task counters, normalized time remaining, or goal-related summaries). Importantly, while $k$ varies by environment, it is fixed across heterogeneity levels within that environment. As a result, the full observation dimension is environment-specific but constant over H0--H3: in our implementation $d{=}73$ for warehouse logistics, $d{=}71$ for search-and-rescue, and $d{=}70$ for collaborative mapping.

\paragraph{Handling missing modalities without changing dimensionality.}
Heterogeneity is induced by removing modalities or degrading sensor parameters for some robots. To preserve the fixed interface in~\eqref{eq:fixed_obs_layout}, missing modalities are \emph{not} removed from the observation vector; instead, their segments are filled with neutral constants chosen to represent ``no information'' rather than spurious structure. Specifically, the RGB segment (feature-like) is filled with zeros, while the LiDAR and depth segments (distance-like) are filled with ones (corresponding to normalized ``max range''). This choice has two practical benefits. First, it prevents missing modalities from introducing extreme out-of-range values that could destabilize the frozen shared policy. Second, it makes the absence of a modality \emph{detectable} to the transform module (via constant segments), allowing learned transforms to down-weight or ignore missing modalities while preserving dimensional compatibility.

\paragraph{Resampling LiDAR to a fixed segment length.}
To model heterogeneity in LiDAR resolution (e.g., varying ray counts), robots may produce native LiDAR readings with different lengths. We map these native readings to the fixed 16-dimensional LiDAR segment by linear resampling (interpolation) over the angular domain. This preserves coarse angular structure while ensuring that the policy always receives a fixed-length LiDAR representation. Critically, this choice avoids a common confound in heterogeneous studies: differences in network behavior caused by mismatched input sizes or inconsistent feature semantics.

Table~\ref{tab:obs_layout} summarizes the observation layout, normalization conventions, and default-fill policies used to guarantee checkpoint compatibility and controlled evaluation.

\input{tables/obs_layout_table.tex}

\subsection{Heterogeneity levels H0--H3}
We instantiate four heterogeneity regimes (Table~\ref{tab:heterogeneity}) spanning homogeneous sensing (H0) through severe heterogeneity (H3). The intent is to expose algorithms to a progressively more challenging sequence of distribution shifts while keeping the policy interface unchanged. The design follows established stress-testing practices for asymmetric or heterogeneous multi-agent systems, where capability mismatch is explicitly introduced to evaluate robustness and adaptation \cite{bettini2023heterogeneous,gao2023asymmetric,fu2022robust}.

H0 corresponds to fully homogeneous robots with identical modality availability and sensor parameters. H1 introduces mild heterogeneity by altering sensor parameters (e.g., range, field-of-view, or sampling density) and selectively removing one modality for a subset of robots, while still preserving substantial sensing overlap across the team. H2 induces moderate heterogeneity by mixing sensor suites across robots (e.g., different modality subsets and parameterizations), increasing the degree of partial observability mismatch. H3 is designed as a severe and deliberately adversarial regime: it includes single-modality robots (LiDAR-only, RGB-only, depth-only) and a degraded multimodal robot. This regime tests whether a method can maintain coordinated behavior when the team lacks a shared sensing basis and individual robots must operate with limited, modality-specific information.

Table~\ref{tab:heterogeneity} reports the full per-robot sensing configuration for each heterogeneity level, including modality availability and parameter degradations. Figure~\ref{fig:hetero_overview} provides qualitative context: it juxtaposes a conceptual severe-heterogeneity illustration, representative sensor-configuration motifs, and the top-down arena layout used for the warehouse-style environment in our simulator.

\input{tables/heterogeneity_table.tex}

\begin{figure*}[t]
\centering
\includegraphics[width=.9\linewidth]{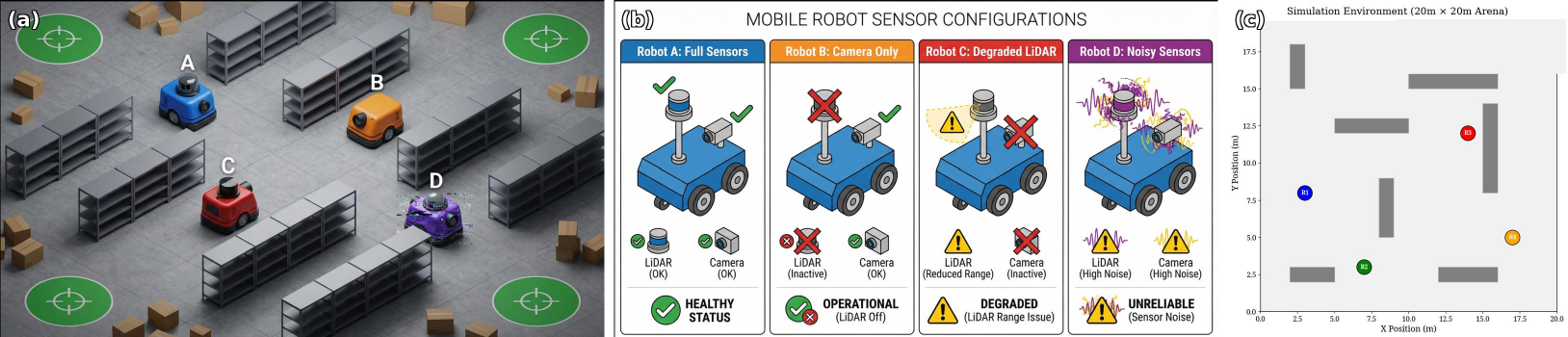}
\caption{Merged environment/heterogeneity illustration: (a) severe heterogeneity rendered scene (conceptual), (b) sensor configuration examples motivating heterogeneity, and (c) top-down arena layout for the warehouse-style environment used in our simulator.}
\label{fig:hetero_overview}
\end{figure*}

% ============================================================
\section{DC-Ada Method}
\label{sec:method}
DC-Ada is a deploy-time adaptation procedure for heterogeneous multi-robot teams that preserves a \emph{frozen} shared policy and adapts only a lightweight, robot-specific observation transform. The method operates under \emph{scalar reward feedback} and does not require policy gradients, centralized state, or observation sharing. Conceptually, DC-Ada treats the environment-and-policy stack as a black box: it proposes candidate interface perturbations, evaluates them via short rollouts under common random numbers (CRN), and applies a conservative accept/reject rule to update each robot's transform.

\subsection{Transformation parameterization}
\label{subsec:method_transform}
Each robot $i \in \{1,\ldots,N\}$ maintains a transform module $g_{\phi_i}:\mathbb{R}^{d}\rightarrow\mathbb{R}^{d}$ that maps the fixed-layout observation vector $o_{i,t}\in\mathbb{R}^{d}$ to a transformed observation $\hat{o}_{i,t}\in\mathbb{R}^{d}$ before inference by the frozen shared policy $\pi_\theta$. The transform is designed to be (i) low-capacity, to reduce overfitting and preserve stability, and (ii) identity-preserving at initialization, to ensure safe deployment prior to adaptation.

In our implementation, $g_{\phi_i}$ is a residual bottleneck MLP with latent dimension $d_z{=}32$ and a LayerNorm stabilizer:
\begin{align}
z &= \mathrm{LN}\!\left(\mathrm{ReLU}(W_e o + b_e)\right), \label{eq:adapter_enc}\\
h &= \mathrm{ReLU}(W_h z + b_h), \label{eq:adapter_hid}\\
\hat{o} &= o + (W_d h + b_d), \label{eq:adapter_dec}
\end{align}
where $o,\hat{o}\in\mathbb{R}^{d}$ and $(W_e,b_e,W_h,b_h,W_d,b_d)$ comprise $\phi_i$. The residual form in~\eqref{eq:adapter_dec} biases the transform toward small, localized corrections to the observation interface rather than wholesale remapping, and LayerNorm in~\eqref{eq:adapter_enc} reduces sensitivity to modality-dependent scale shifts. Crucially, initializing all transform parameters to zero yields an exact identity mapping ($\hat{o}=o$), which (i) prevents degradation from random initialization, (ii) makes gains attributable to adaptation rather than chance initialization, and (iii) supports stable incremental updates from a known baseline.

\subsection{Budgeted accept/reject zeroth-order adaptation}
\label{subsec:method_adaptation}
DC-Ada performs periodic adaptation rounds every $K$ \emph{episodes} under a fixed environment-step budget $B$ (Sec.~\ref{sec:experiments}). Adaptation is \emph{reward-only}: each decision uses only the scalar return of short rollouts and does not require gradients or privileged signals. The update is also \emph{coordinate-like}: when adapting robot $i$, only $\phi_i$ is perturbed while all other robots' transforms $\{\phi_j\}_{j\neq i}$ are held fixed. This isolates the effect of each robot's interface correction under a shared team objective.

\paragraph{Candidate generation and evaluation.}
At an adaptation round, DC-Ada evaluates one baseline and $M$ candidates for each robot $i$. Candidates are generated by isotropic Gaussian perturbations
\begin{equation}
\phi_i^{(m)} \;=\; \phi_i + \sigma\,\epsilon_m, \qquad \epsilon_m \sim \mathcal{N}(0,I),
\label{eq:perturb}
\end{equation}
where $\sigma>0$ controls exploration in transform-parameter space. Each candidate is evaluated using a truncated rollout of horizon $T_c<T$ (default $T_c/T=0.25$) to respect the fixed interaction budget. To reduce return variance in candidate comparisons, DC-Ada uses \emph{common random numbers} (CRN): baseline and candidate rollouts are executed with the same episode seed and environment initialization, so that differences in return are primarily attributable to the perturbation rather than stochastic rollouts (Sec.~\ref{sec:theory}).

Let $R_0$ denote the truncated return of the baseline rollout and $R_m$ the return of candidate $m$. DC-Ada selects the best candidate index $m^\star \leftarrow \arg\max_m R_m$ and applies a conservative accept/reject test:
\begin{equation}
\phi_i \leftarrow
\begin{cases}
\phi_i + \alpha\,\sigma\,\epsilon_{m^\star}, & \text{if } R_{m^\star} - R_0 > \tau,\\
\phi_i, & \text{otherwise},
\end{cases}
\label{eq:accept_reject}
\end{equation}
where $\alpha>0$ is a step size and $\tau\ge 0$ is an acceptance margin. The margin $\tau$ reduces false accepts when rollouts are noisy and also partially mitigates short-horizon mismatch (Sec.~\ref{sec:theory}) by requiring an improvement sufficiently large to be unlikely under noise alone.

\paragraph{Budget accounting and fairness.}
All baseline and candidate rollouts consume environment steps and are charged against the same budget $B$ used to evaluate baselines. Consequently, DC-Ada may execute fewer full-length episodes than non-adaptive methods, but the \emph{total environment interaction} remains matched across methods. This is essential for interpreting gains as robustness improvements rather than artifacts of unequal data usage.

Algorithm~\ref{alg:dcada} summarizes DC-Ada as implemented in our pipeline, including CRN candidate evaluation, truncated rollouts, coordinate-wise updates, and budgeted execution.

\begin{algorithm}[t] \scriptsize
\caption{DC-Ada (per run, budgeted)}
\label{alg:dcada}
\begin{algorithmic}[1]
\REQUIRE Frozen shared policy $\pi_\theta$, transforms $\{\phi_i\}_{i=1}^N$, budget $B$, update interval $K$, candidates $M$, noise scale $\sigma$, step size $\alpha$, accept margin $\tau$, candidate horizon $T_c$
\STATE Initialize steps used $\leftarrow 0$
\WHILE{steps used $< B$}
\STATE Execute one nominal episode using $a_{i,t}\sim \pi_\theta\!\big(g_{\phi_i}(o_{i,t})\big)$; increment steps used by episode length; log reward and success
\IF{episode index $\bmod K = 0$ \AND steps used $< B$}
\FOR{$i=1$ to $N$}
\STATE Baseline: run a CRN short rollout of length $T_c$ with current $\{\phi_j\}_{j=1}^N$; obtain $R_0$; increment steps used by $T_c$
\STATE Sample $\epsilon_1,\dots,\epsilon_M\sim \mathcal{N}(0,I)$
\FOR{$m=1$ to $M$}
\STATE Candidate: set $\phi_i^{(m)}=\phi_i+\sigma\epsilon_m$; run CRN short rollout of length $T_c$ with $\phi_i\!=\!\phi_i^{(m)}$ and $\phi_{j\neq i}$ fixed; obtain $R_m$; increment steps used by $T_c$
\ENDFOR
\STATE $m^\star \leftarrow \arg\max_m R_m$
\IF{$R_{m^\star}-R_0>\tau$}
\STATE $\phi_i \leftarrow \phi_i + \alpha\,\sigma\,\epsilon_{m^\star}$
\ENDIF
\ENDFOR
\ENDIF
\ENDWHILE
\end{algorithmic}
\end{algorithm}

\subsection{Communication model}
\label{subsec:method_comm}
DC-Ada assumes a minimal communication mechanism sufficient to provide scalar team-level feedback for candidate selection. In our protocol, communication is limited to broadcasting a per-rollout scalar return (and, optionally, an accepted candidate index), while no raw observations, maps, gradients, or intermediate features are exchanged. This design aligns with communication-minimal optimization paradigms and evolution-strategy-style update mechanisms that scale via shared randomness and low-dimensional broadcasts \cite{salimans2017es,mania2018simple}. In practical implementations, the scalar return can be computed by a central logger or aggregated through a low-rate broadcast; critically, the adaptation itself remains local to each robot's transform parameters and does not require message-passing policies or centralized belief-state inference.

% ============================================================
\section{Analysis and Design Rationale}
\label{sec:theory}
The objective in Sec.~\ref{sec:problem_formulation} has the structure of a \emph{black-box} stochastic optimization problem: the team can query scalar rollout returns, but does not access policy gradients, centralized belief states, or privileged global supervision. This section provides an analytical lens that (i) situates DC-Ada within zeroth-order and bandit optimization, (ii) motivates common-random-number (CRN) evaluation as a variance-reduction mechanism for candidate selection, (iii) clarifies what accept/reject selection implies (and does not imply) about expected improvement, and (iv) makes explicit the interaction and scalar-feedback costs incurred under a fixed environment-step budget.

\subsection{Black-box interface adaptation as zeroth-order optimization}
Let $\phi \in \mathbb{R}^d$ denote the concatenated parameters of all robot-specific observation transforms (i.e., $\phi \triangleq [\phi_1;\ldots;\phi_N]$), and let $\xi$ collect all sources of rollout randomness, including initial conditions, random placements, and any stochasticity in the environment dynamics. For a rollout of horizon $T$, define the realized return
\begin{equation}
F_T(\phi,\xi) \;\triangleq\; \sum_{t=1}^{T} r_t,
\end{equation}
and the objective
\begin{equation}
J_T(\phi) \;\triangleq\; \mathbb{E}_{\xi}\!\left[F_T(\phi,\xi)\right].
\end{equation}
The optimization of $J_T(\phi)$ using only function evaluations of $F_T(\phi,\xi)$ is the canonical setting of stochastic zeroth-order optimization and bandit feedback \cite{flaxman2005bandit,kushner2003stochastic,borkar2008stochastic,duchi2015optimal,nesterov2017random}. In this view, DC-Ada performs \emph{structured} black-box optimization: it restricts adaptation to a low-capacity interface module (the observation transform) while keeping the shared policy fixed, thereby reducing both the effective dimension $d$ and the risk of destabilizing previously learned coordination behavior.

\subsection{CRN evaluation as variance reduction for candidate comparisons}
DC-Ada relies on comparing a baseline rollout to one or more perturbed candidates and accepting an update only when a candidate empirically improves the return. Because rollout returns can exhibit substantial variance due to environment randomness, stabilizing these comparisons is essential for reliable accept/reject decisions. A standard variance-reduction technique in simulation optimization and evolution strategies is the use of \emph{common random numbers} (CRN), i.e., evaluating baseline and candidate under the same realization of $\xi$ \cite{salimans2017es}.

For two parameter vectors $\phi$ and $\phi'$, define the single-seed difference estimator
\begin{equation}
\widehat{\Delta}(\phi,\phi';\xi) \;\triangleq\; F_T(\phi',\xi) - F_T(\phi,\xi).
\end{equation}
Under CRN, both terms share the same $\xi$. By direct expansion,
\begin{align} 
\mathrm{Var}\!\left[\widehat{\Delta}(\phi,\phi';\xi)\right]
= &\mathrm{Var}[F_T(\phi',\xi)] + \mathrm{Var}[F_T(\phi,\xi)] - \nonumber\\ 
&2\,\mathrm{Cov}\!\left(F_T(\phi',\xi),F_T(\phi,\xi)\right).
\label{eq:crn_var}
\end{align}
In contrast, under independent-seed evaluation with $\xi$ and $\xi'$ independent,
\begin{equation} \scriptsize
\mathrm{Var}\!\left[F_T(\phi',\xi)-F_T(\phi,\xi')\right]
= \mathrm{Var}[F_T(\phi',\xi)] + \mathrm{Var}[F_T(\phi,\xi)].
\end{equation}
Therefore, whenever the covariance term in~\eqref{eq:crn_var} is positive (a typical regime when $\phi'$ is a small perturbation of $\phi$ and the induced trajectories remain similar), CRN strictly reduces the variance of the improvement estimate relative to independent-seed evaluation.

\begin{lemma}[CRN reduces the variance of rollout differences]
\label{lem:crn}
If $\mathrm{Cov}(F_T(\phi',\xi),F_T(\phi,\xi))>0$, then
\[
\mathrm{Var}\!\left[\widehat{\Delta}(\phi,\phi';\xi)\right]
<
\mathrm{Var}\!\left[F_T(\phi',\xi)-F_T(\phi,\xi')\right],
\]
where $\xi$ and $\xi'$ are independent.
\end{lemma}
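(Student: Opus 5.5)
The plan is to compute both variances explicitly and compare them term by term. Throughout, $F_T(\phi,\cdot)$ and $F_T(\phi',\cdot)$ are assumed square-integrable. This is automatic here, since returns are bounded: $r_t$ is bounded and $T$ is finite. With this assumption all variances and covariances below are finite and the usual bilinearity identities apply.

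\textbf{CRN difference.} For the CRN estimator, I will expand $\mathrm{Var}[X-Y]=\mathrm{Var}[X]+\mathrm{Var}[Y]-2\,\mathrm{Cov}(X,Y)$ with $X=F_T(\phi',\xi)$ and $Y=F_T(\phi,\xi)$. This is exactly~\eqref{eq:crn_var}.

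\textbf{Independent-seed difference.} Set $Y'=F_T(\phi,\xi')$. Since $\xi'$ is an independent copy of $\xi$, two facts follow. First, $Y'$ is a measurable function of $\xi'$ alone, so $X$ and $Y'$ are independent and $\mathrm{Cov}(X,Y')=0$. Second, $\xi'$ has the same distribution as $\xi$, so $Y'\stackrel{d}{=}Y$ and hence $\mathrm{Var}[Y']=\mathrm{Var}[Y]$. Together these give
\[
\mathrm{Var}[X-Y']=\mathrm{Var}[X]+\mathrm{Var}[Y].
\]

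\textbf{Comparison.} Subtracting the two expressions gives
\[
\mathrm{Var}[X-Y']-\mathrm{Var}[X-Y]=2\,\mathrm{Cov}(X,Y),
\]
which is strictly positive by the hypothesis of the lemma. This yields the strict inequality.

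\textbf{Main obstacle.} The only delicate point is making explicit that "independent" means $\xi'$ is an i.i.d. copy of $\xi$. Independence alone makes the covariance term vanish, but it does not equate $\mathrm{Var}[F_T(\phi,\xi')]$ with $\mathrm{Var}[F_T(\phi,\xi)]$; that step needs equality in distribution. I will state this interpretation, together with finiteness of second moments, at the start of the proof.
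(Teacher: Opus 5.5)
Your proposal is correct and follows essentially the same argument as the paper: expand $\mathrm{Var}[X-Y]$ under CRN, note that the covariance term vanishes under independent seeds, and conclude from positivity of the covariance. Your explicit assumptions that $\xi'$ is an identically distributed copy of $\xi$ and that second moments are finite are left implicit in the paper, but they are exactly what the two variance identities and the strict inequality require.
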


Lemma~\ref{lem:crn} is directly relevant to DC-Ada because accept/reject selection depends on the sign and magnitude of measured improvements. Reducing the variance of $\widehat{\Delta}$ decreases the probability of both false accepts (accepting a non-improving candidate) and false rejects (rejecting an improving candidate) at a fixed evaluation budget.

\subsection{Accept/reject selection: reliability and what it guarantees}
Accept/reject guarantees improvement only on the \emph{sampled} evaluation outcomes and does not imply monotonic improvement of the expected objective $J_T(\phi)$. A standard reliability mechanism is to evaluate candidate improvements using $S\ge 1$ independent seeds and to accept a candidate only if its empirical mean improvement exceeds a margin $\tau\ge 0$. Define
\begin{equation}
\widehat{\Delta}_S(\phi,\phi') \;\triangleq\; \frac{1}{S}\sum_{s=1}^{S}\Big(F_T(\phi',\xi_s)-F_T(\phi,\xi_s)\Big),
\end{equation}
where $\{\xi_s\}_{s=1}^{S}$ are i.i.d. rollouts (CRN is applied within each pair). When returns are bounded, one can bound the probability of accepting a candidate whose expected improvement is non-positive.

\begin{proposition}[False-accept probability under bounded returns]
\label{prop:false_accept}
Assume $F_T(\phi,\xi)\in[-R,R]$ for all $\phi,\xi$ and let $\{\xi_s\}_{s=1}^{S}$ be i.i.d.
If a candidate $\phi'$ is accepted whenever $\widehat{\Delta}_S(\phi,\phi')>\tau$, then
\begin{equation} \scriptsize
\mathbb{P}\!\left(\, J_T(\phi')-J_T(\phi)\le 0 \ \wedge\ \widehat{\Delta}_S(\phi,\phi')>\tau \,\right)
\;\le\;
\exp\!\left(-\frac{S\,\tau^2}{8R^2}\right).
\label{eq:false_accept_bound}
\end{equation}
\end{proposition}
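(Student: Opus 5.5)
The plan is to apply Hoeffding's inequality to the per-seed CRN differences. Treat $\phi$ and $\phi'$ as fixed, i.e., not selected using the same seeds $\{\xi_s\}$. Define
\[
X_s \triangleq F_T(\phi',\xi_s)-F_T(\phi,\xi_s), \qquad s=1,\dots,S.
\]
These are i.i.d. because the $\xi_s$ are i.i.d. and CRN is applied only within each pair. By the bounded-return assumption, each $X_s$ lies in $[-2R,2R]$, an interval of length $4R$. Also $\mathbb{E}[X_s]=J_T(\phi')-J_T(\phi)\triangleq\Delta$, so $\widehat{\Delta}_S(\phi,\phi')=\frac1S\sum_s X_s$ is an unbiased estimator of $\Delta$.

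The next step reduces the joint event to a deviation event. Since $\phi,\phi'$ are deterministic here, $\Delta$ is a constant. If $\Delta>0$, the event $\{\Delta\le 0 \wedge \widehat{\Delta}_S>\tau\}$ is empty and the bound holds trivially. If $\Delta\le 0$, then $\widehat{\Delta}_S>\tau$ implies $\widehat{\Delta}_S-\Delta>\tau-\Delta\ge\tau$. Hence in all cases
\[
\mathbb{P}\big(\Delta\le 0\wedge\widehat{\Delta}_S>\tau\big)\le \mathbb{P}\big(\widehat{\Delta}_S-\mathbb{E}\widehat{\Delta}_S\ge \tau\big).
\]

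One-sided Hoeffding for variables in an interval of length $4R$ bounds the right-hand side by
\[
\exp\!\big(-2S^2\tau^2/(S\cdot 16R^2)\big)=\exp\!\big(-S\tau^2/(8R^2)\big),
\]
which is exactly \eqref{eq:false_accept_bound}. For $\tau=0$ the bound equals $1$ and holds trivially.

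The main obstacle is interpretive: the statement has to be read with $\phi'$ fixed, or at least independent of the evaluation seeds. In DC-Ada, $\phi'$ is a random perturbation, so I would condition on the candidate, through $\epsilon_m$ and the current $\phi$. The bound holds conditionally, and taking expectations preserves it. This conditioning needs the evaluation seeds $\{\xi_s\}$ to be independent of how the candidate was generated, and I would state that assumption explicitly. The bound does not cover post-selection over $M$ candidates via $\arg\max$. For that case I would note that a union bound adds a factor of $M$.
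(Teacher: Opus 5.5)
Your proposal is correct and takes the same route as the paper. Both bound each CRN difference in $[-2R,2R]$, note that its mean $J_T(\phi')-J_T(\phi)$ is non-positive on the event of interest, and apply one-sided Hoeffding with range $4R$ to obtain $\exp(-S\tau^2/(8R^2))$; your explicit case split on the sign of $\Delta$ and your requirement that $\phi'$ be fixed or independent of the evaluation seeds simply make precise steps that the paper's short proof leaves implicit.
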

\begin{proof}
For each $s$, define $D_s \triangleq F_T(\phi',\xi_s)-F_T(\phi,\xi_s)\in[-2R,2R]$. If $J_T(\phi')-J_T(\phi)\le 0$ then $\mathbb{E}[D_s]\le 0$ and thus $\mathbb{E}[\widehat{\Delta}_S]\le 0$. Hoeffding's inequality yields
$\mathbb{P}(\widehat{\Delta}_S-\mathbb{E}[\widehat{\Delta}_S]>\tau)\le \exp(-S\tau^2/(8R^2))$.
\end{proof}

Proposition~\ref{prop:false_accept} formalizes a design trade-off: increasing $S$ or $\tau$ decreases false accepts but increases interaction cost (more evaluations per decision) or conservatism (fewer accepted updates). In the main sweep we use $S{=}1$ for efficiency and rely on (i) CRN to reduce variance in pairwise comparisons and (ii) multi-seed reporting to assess robustness of the overall method. The bound~\eqref{eq:false_accept_bound} nevertheless provides a principled rationale for increasing $S$ or $\tau$ in higher-stakes deployments.

\subsection{Why best-of-$M$ perturbations tends to move uphill}
DC-Ada evaluates $M$ random perturbations and selects the best candidate before applying an accept/reject test. This mechanism is closely related to random search and derivative-free optimization, where improvement is obtained by sampling directions that align with the (unknown) gradient \cite{nesterov2017random,duchi2015optimal,mania2018simple}. To make this connection explicit, suppose $J_T$ is differentiable and locally approximated by a first-order expansion. For a small perturbation $\delta=\sigma\epsilon$ with $\epsilon\sim\mathcal{N}(0,I)$,
\begin{equation}
J_T(\phi+\delta) \;\approx\; J_T(\phi) + \sigma\, \nabla J_T(\phi)^\top \epsilon.
\label{eq:first_order}
\end{equation}
The random variable $\nabla J_T(\phi)^\top \epsilon$ is Gaussian with mean $0$ and variance $\|\nabla J_T(\phi)\|_2^2$. Selecting the best among $M$ samples increases the expected maximum of these Gaussian projections by a factor that scales as $\mathbb{E}[\max_{m\le M} Z_m]$, where $Z_m\sim \mathcal{N}(0,1)$, and $\mathbb{E}[\max_{m\le M} Z_m]=\Theta(\sqrt{\log M})$. Consequently, in regimes where a first-order approximation is informative and $\sigma$ is sufficiently small, best-of-$M$ selection tends to produce candidate directions with positive alignment to $\nabla J_T(\phi)$.

A slightly more refined statement follows from standard smoothness arguments. If $J_T$ has $L$-Lipschitz gradient, then for any perturbation $\delta$,
\begin{equation}
J_T(\phi+\delta) \;\ge\; J_T(\phi) + \nabla J_T(\phi)^\top \delta - \frac{L}{2}\|\delta\|_2^2.
\label{eq:smooth_lower}
\end{equation}
Combining~\eqref{eq:smooth_lower} with $\delta=\sigma\epsilon$ suggests that best-of-$M$ selection increases the linear term in expectation, while the quadratic term penalizes overly large $\sigma$. This provides a theoretical motivation for DC-Ada's conservative perturbation scale and identity-preserving initialization: the method aims to remain within a locally smooth regime where sampled candidates are predictive of improvement.

\subsection{Truncated candidate rollouts and objective mismatch}
To respect a fixed interaction budget, DC-Ada evaluates baseline and candidates on truncated rollouts of horizon $T_c<T$ and uses these truncated returns for selection. Let $F_{T_c}(\phi,\xi)\triangleq \sum_{t=1}^{T_c} r_t$ and $J_{T_c}(\phi)=\mathbb{E}[F_{T_c}(\phi,\xi)]$. Candidate selection using $J_{T_c}$ is, in general, biased relative to the full-horizon objective $J_T$, and improvements under $T_c$ need not translate to improvements under $T$.

A standard way to quantify the mismatch is to assume bounded per-step rewards, $|r_t|\le r_{\max}$. Then for any $\phi,\xi$,
\begin{equation}
\big|F_T(\phi,\xi)-F_{T_c}(\phi,\xi)\big| \;\le\; (T-T_c)\,r_{\max}.
\label{eq:trunc_bound}
\end{equation}
Equation~\eqref{eq:trunc_bound} implies that if a candidate exhibits a sufficiently large empirical improvement under $T_c$ (and the accept margin $\tau$ is chosen conservatively), the risk of accepting a purely ``short-horizon'' improvement can be reduced. In practice, DC-Ada mitigates truncation mismatch through conservative update intervals, CRN-based candidate comparisons, and a fixed candidate horizon fraction, which together reduce the frequency of spurious acceptances driven by high-variance early returns.

\subsection{Interaction and communication complexity under a fixed budget}
Let $T$ denote the nominal episode horizon and $T_c$ the candidate horizon. Every $K$ episodes, DC-Ada performs (for each robot) one baseline rollout and $M$ candidate rollouts of length $T_c$ for candidate evaluation. Thus, one update round consumes
\begin{equation}
\Delta B_{\text{adapt}} \;=\; N\,(M+1)\,T_c
\label{eq:interaction_overhead}
\end{equation}
additional environment steps beyond the nominal execution steps. Under a fixed budget $B$, this overhead reduces the number of full episodes that can be executed, but does not increase total environment interaction. The design therefore induces a controlled trade-off: interaction is allocated either to longer nominal episodes or to short evaluation rollouts that improve the interface parameters.

Communication requirements remain minimal in the protocol considered here. Each rollout produces a scalar return that is broadcast (or equivalently logged and compared) to support candidate selection and accept/reject decisions. If scalar returns are represented as 64-bit floats, then each broadcast requires 8 bytes. Over one update round with $S$ evaluation seeds per decision, the communication volume scales as
\begin{equation}
\Delta C_{\text{adapt}} \;=\; 8 \, S \, N \,(M+1) \quad \text{bytes},
\end{equation}
which is negligible relative to message-passing approaches that exchange observations, maps, or gradients. This accounting formalizes the intended operating point of DC-Ada: reward-only, communication-light adaptation that treats the environment and the shared policy as black boxes while explicitly tracking the interaction cost incurred by candidate selection.

% ============================================================
\section{Baselines}
\label{sec:baselines}
We compare DC-Ada against four baselines implemented in the same codebase and evaluated under the same interaction budget ($B{=}200{,}000$ joint environment steps), heterogeneity levels (H0--H3), and random seeds. All methods share the same pretrained \emph{shared policy} parameters; the comparison therefore isolates the effect of \emph{test-time interface adaptation} (or the absence thereof) under heterogeneous sensing. For methods that perform additional rollouts for selection or learning (e.g., DC-Ada candidate evaluations or local fine-tuning updates), these rollouts are charged against the same environment-step budget to ensure fairness.

The baseline set is chosen to compare mechanisms that operate under the same frozen-policy deployment premise: no adaptation, statistics-only input stabilization, unselected transform perturbation, and gradient-based transform adaptation. Stronger training-time heterogeneous MARL baselines and learned-communication architectures are not direct drop-in baselines for this setting because they alter the policy architecture, training objective, or communication protocol before deployment. We therefore use them as related work and focus the empirical comparison on methods that can be applied after a shared policy has already been trained. To further clarify the role of DC-Ada's design choices and robustness, Sec.~\ref{sec:results_targeted} reports targeted H3 ablations and observation-stress tests under the same frozen-policy premise.

\paragraph{Shared Policy (no adaptation).}
This baseline executes the pretrained shared policy directly, without any test-time modification to the observation stream. It represents the natural deployment strategy when a single controller is trained under homogeneous sensing and then deployed unchanged across heterogeneous platforms. Because the shared policy is frozen and no additional computation is performed, this baseline provides a reference point for quantifying the magnitude of heterogeneity-induced degradation and the potential value of interface adaptation.

\paragraph{Observation Normalization.}
This baseline applies online mean/variance normalization to the fixed-layout observation vector prior to policy evaluation. The normalization statistics are updated during evaluation using a running estimator and are reset at the start of each run (environment $\times$ heterogeneity level $\times$ seed) to avoid cross-run leakage. Observation normalization does not modify the policy parameters and does not introduce a learned adapter; it instead targets a common failure mode under heterogeneity in which modality loss or sensor-parameter shifts induce large changes in the scale and distribution of input features. This baseline therefore tests whether distribution shift can be mitigated by stabilizing feature magnitudes alone, without introducing an additional trainable interface.

\paragraph{Random Perturbation.}
This baseline uses the same per-robot transform module as DC-Ada but removes the key selection mechanism. At a fixed adaptation interval, each robot perturbs its transform parameters with additive Gaussian noise and continues execution without evaluating candidates or applying an accept/reject rule. In other words, this baseline isolates the effect of ``trying different transforms'' while controlling for architectural capacity and update frequency. Any performance gains relative to the shared policy may be attributed to incidental exploration in transform space, whereas gains of DC-Ada beyond this baseline can be attributed to conservative best-candidate selection under reward feedback.

\paragraph{Local Fine-Tuning.}
This baseline performs gradient-based updates of the per-robot transform parameters while keeping the shared policy frozen. Concretely, each robot maintains its own transform parameters and updates them using a policy-gradient objective computed from the frozen shared policy's log-likelihood and the observed episode return (or truncated-rollout return, depending on the configured update horizon). This baseline is intentionally strong: when gradients are stable and on-device compute is available, direct optimization of transform parameters can exploit informative shaped rewards more efficiently than reward-only candidate search. At the same time, this baseline remains comparable to DC-Ada in that it adapts only the interface module (not the shared policy) and is evaluated under the same interaction budget, so any benefit reflects the advantage of gradient information rather than additional data.

\paragraph{Hyperparameters and matching constraints.}
All baselines operate on the same fixed-dimensional observation layout and are evaluated with identical episode horizons and termination criteria. Hyperparameters (e.g., adaptation intervals, perturbation scales, fine-tuning learning rates, and update horizons) are reported in Table~\ref{tab:hyperparams}. We emphasize that DC-Ada and random perturbation use only scalar reward feedback for their update decisions, while local fine-tuning leverages gradients through the frozen policy to update the transform parameters; this distinction is central to interpreting the empirical trade-offs.

\input{tables/hyperparams_table_access.tex}

% ============================================================
\section{Experimental Setup}
\label{sec:experiments}

\subsection{Simulator, robot dynamics, and observation interface}
All experiments are conducted in a lightweight, deterministic, two-dimensional multi-robot simulator implemented in pure NumPy. The simulator is designed to support controlled heterogeneity studies and reproducible large sweeps across methods, seeds, and sensing configurations. Each episode evolves for at most $T{=}500$ discrete time steps. Robots follow simple velocity control with a fixed integration step ($\Delta t{=}0.1$) and bounded actions. At every step, each robot selects a two-dimensional action $a_t \in [-1,1]^2$ that is mapped to a planar velocity command and applied with a fixed speed scale; positions are clipped to remain inside the world boundaries. Obstacle collisions are handled by rejecting the position update when the proposed motion would intersect an obstacle. Robot--robot collisions are recorded as events when inter-robot distance falls below a fixed radius, and these events contribute to the task-specific reward where applicable.

A key requirement for evaluating heterogeneous sensing under a frozen shared policy is to maintain a \emph{fixed-dimensional observation interface} across robots and heterogeneity levels. To this end, the simulator enforces a constant feature layout for each environment of the form
\[ \scriptsize
[\text{pos}(2),\ \text{vel}(2),\ \text{extra}(k),\ \text{LiDAR}(16),\ \text{RGB}(32),\ \text{Depth}(16)] 
\]
where $\text{extra}(k)$ denotes environment-specific task features and the modality segments have fixed sizes. Position is normalized by the world size, and velocities are recorded in simulator units. When a robot lacks a modality, the corresponding segment is filled with a neutral default: the RGB segment is set to zeros (feature-like), while the LiDAR and Depth segments are set to ones (distance-like, corresponding to ``max range''). Robots may have different native LiDAR ray counts depending on heterogeneity level; native LiDAR readings are therefore linearly resampled to the fixed 16-dimensional LiDAR segment to preserve a consistent feature layout and ensure checkpoint compatibility across heterogeneity.

The simulator provides three sensing modalities with simple but explicit generative models. LiDAR readings are computed by ray casting against circular obstacles and world boundaries, and (optionally) against task targets (packages/victims/points of interest) modeled as small circles; distances are normalized by the robot-specific LiDAR range. RGB observations are represented as a compact 32-dimensional feature vector encoding up to eight nearby targets within the robot's camera field of view; each target contributes a small set of geometric features (normalized distance and relative bearing components). Depth observations are represented by a 16-dimensional angular sweep over a limited field of view, with distances normalized by a fixed depth range. These simplified sensing models are sufficient to induce meaningful distribution shifts under heterogeneity while remaining transparent and computationally efficient.

\subsection{Heterogeneity protocol}
Heterogeneity is induced by varying each robot's sensor suite and sensor parameters according to four levels (H0--H3). H0 corresponds to homogeneous robots with full sensing. H1 introduces mild variations in sensor parameters and partial modality availability. H2 uses mixed sensor suites with different modality combinations and parameter settings. H3 represents severe heterogeneity, including robots that possess only a single modality (LiDAR-only, RGB-only, or Depth-only) as well as a robot with a severely degraded LiDAR configuration. Across all levels, the fixed observation interface described above is preserved, ensuring that adaptation methods operate on a constant-dimensional input and that changes in performance can be attributed to sensing differences rather than to incompatible network interfaces. The exact per-robot sensor configurations and parameters are reported in the paper's heterogeneity overview figure and the corresponding setup tables.

\subsection{Environments and tasks}
We evaluate DC-Ada and baselines on three domains that capture distinct multi-robot coordination demands under heterogeneous sensing.

\paragraph{Warehouse logistics.}
Warehouse logistics models navigation in cluttered layouts with discrete pickup-and-delivery events, motivated by automated fulfillment and heterogeneous warehouse-robot operations \cite{Kang02122025}. The environment is a bounded $20{\times}20$ workspace with circular obstacles representing shelving units placed at fixed locations. Packages are sampled at random free-space locations and two drop-off zones are positioned near opposing sides of the arena. Robots must navigate to packages, pick them up when within a pickup radius, and deliver them to a drop-off zone. The simulator tracks the delivered count, pickup events, and collision events. To preserve comparability across methods, the same initialization protocol and random seed control are used across all runs, and robots are spawned in distinct quadrants with bounded random perturbations.

\paragraph{Search-and-rescue.}
Search-and-rescue models exploration under time pressure with victim detection and retrieval, motivated by disaster robotics \cite{murphy2019disaster}. The environment is a $30{\times}30$ workspace with randomly placed debris obstacles. Victims are placed at random collision-free locations and are associated with a health variable that decreases over time, incentivizing rapid discovery and rescue. A victim is marked as found when a robot enters a detection radius and is marked as rescued when a robot enters a smaller rescue radius. The environment returns shaped rewards for finding and rescuing victims and includes a coverage-style exploration term to encourage broad search.

\paragraph{Collaborative mapping.}
Collaborative mapping models exploration and coverage, motivated by multi-robot mapping and SLAM front-ends \cite{cieslewski2017efficient,chang2021kimera,chen2022slam}. The environment is a $20{\times}20$ workspace discretized into a fixed-resolution grid. Robots mark grid cells within a fixed sensing radius as explored, producing a global exploration map and an episode-level coverage ratio. The reward includes a coverage term, a bonus for newly explored area, and a spread term encouraging the team to disperse spatially. This domain is particularly sensitive to perception and local sensing cues, and thus provides a complementary setting to discrete-event tasks for studying the benefits of observation-interface adaptation.

\subsection{Success criteria and progress metrics}
Each episode runs for at most $T{=}500$ steps and terminates early upon satisfying a task-specific success condition. Success is defined by explicit thresholds on task completion variables: deliveries in warehouse logistics, rescues in search-and-rescue, and coverage in collaborative mapping. The exact thresholds and termination conditions are reported in Table~\ref{tab:setup}. These thresholds are intentionally non-trivial operational targets rather than one-event minima; accordingly, success can become a tail event under strong heterogeneity, which is why we also report continuous progress variables and H3 threshold-sensitivity curves in addition to binary success. Because thresholded completion can be stringent in heterogeneous settings, we also record continuous progress metrics that quantify how close an episode is to completion, namely delivery ratio, rescue ratio, and coverage. Reward functions, including event-based and shaping components, are specified in Table~\ref{tab:reward}. The combination of reward, success, and progress metrics enables evaluation that is informative both in dense-feedback regimes and in sparse-completion regimes.

\input{tables/setup_table_revised.tex}
\input{tables/reward_table.tex}

\subsection{Budgets, seeds, and fairness protocol}
All methods in the main sweep are evaluated under the same environment-step budget of $B{=}200{,}000$ joint environment steps per configuration (environment $\times$ heterogeneity level $\times$ seed). Budget is counted in environment steps (one step advances all robots jointly), rather than per-robot steps. Candidate rollouts used by DC-Ada and gradient updates used by local fine-tuning are charged against the same interaction budget to ensure that comparisons reflect a true robustness--interaction trade-off rather than unequal data usage. Each main-sweep configuration is evaluated with five random seeds $\{0,1,2,3,4\}$, and results are reported as mean $\pm$ standard deviation across seeds. The full sweep comprises $3$ environments $\times\ 5$ methods $\times\ 4$ heterogeneity levels $\times\ 5$ seeds, yielding $300$ runs. Tables and figures that cover H0--H3 correspond to this final $200{,}000$-step sweep; Sec.~\ref{sec:results_targeted} reports additional H3-only follow-up studies that use the same per-run budget and are explicitly labeled as targeted checks.

To reduce confounding variation across methods, the simulator is deterministically reseeded at the start of each episode using a seed derived from the run seed and the episode index. Policies are evaluated deterministically (mean action) unless otherwise noted, ensuring that observed differences are attributable to sensing and adaptation rather than to action-sampling variance. For methods that compare candidate perturbations (DC-Ada), the experiment runner supports common-random-number evaluation by reusing the same episode seed across candidate rollouts, which reduces selection variance when deciding whether to accept an update.

\subsection{Pretraining and reproducibility}
For each environment, we pretrain a shared policy under homogeneous sensing (H0) and then freeze it for all subsequent evaluations and adaptations. This protocol isolates the effect of observation-interface adaptation from policy training and ensures that all methods begin from the same coordination behavior. Pretraining uses an advantage actor--critic procedure with a lightweight value network and entropy regularization; the policy is a tanh-squashed Gaussian with a two-layer MLP backbone ($256$ hidden units per layer) and separate heads for the mean and log-standard-deviation parameters. During pretraining, the environment returns a scalar \emph{team} reward; policy gradients are computed by aggregating per-robot log-probabilities and values via averaging, consistent with a shared-policy formulation
\footnote{
All code, configuration files, and scripts to reproduce the full sweep and generate plots/tables are provided at \url{https://github.com/alqithami/DC-ADA}. The reported main sweep corresponds to the final strong configuration (\texttt{configs/strong.yaml}) and the $200{,}000$-step results file used to generate the tables and figures. The experiment runner emits structured results with run metadata (timestamp, platform, Python and library versions), per-episode logs, and aggregated statistics, enabling independent verification of the reported figures and tables.}.

\subsection{Runtime and scalar-feedback accounting}
We report per-run wall-clock time, throughput (steps/s), and episode counts as measured by the experiment runner, and we record a minimal scalar-feedback proxy. Specifically, each rollout contributes a single scalar team-return value (8 bytes) used for evaluation or logging, while no observations, maps, or gradients are exchanged. This accounting yields a conservative low-bandwidth proxy for reward-only adaptation and enables direct comparison to approaches that would require richer message passing. Table~\ref{tab:overhead} summarizes runtime and scalar-feedback statistics for the reported sweep. The reported runs were executed on an Apple M4 Max system with 36\,GB unified memory, and the results metadata records the software stack used to produce the measurements.

\input{tables/overhead_table_access_revised.tex}

% ============================================================

% ============================================================
\section{Results}
\label{sec:results}
All methods are evaluated under a matched interaction budget of $B{=}200{,}000$ joint environment steps, across heterogeneity levels H0--H3 and five random seeds. The evaluation reports two complementary views of performance. The first is the \emph{mean episode reward}, which aggregates shaped progress signals over an episode and is therefore sensitive to incremental improvements in navigation, search, exploration, and task execution. The second is the \emph{success rate}, defined as the fraction of episodes satisfying a task-completion predicate (Table~\ref{tab:setup}). Because thresholded completion can be sparse under severe heterogeneity, we additionally report continuous progress metrics (deliveries/rescues/coverage) and success-threshold sensitivity under H3 (Sec.~\ref{sec:results_h3}), which provides a distributional view of completion difficulty beyond a single fixed threshold. A central empirical takeaway is that no single mitigation strategy dominates across all domains and metrics; the results are best interpreted as a comparison of operating points under shared interaction constraints. In particular, we do not interpret the evaluation as evidence that DC-Ada is a universal top performer in shaped reward; rather, it is a reward-only, communication-light alternative whose advantage appears most clearly in completion-sensitive mapping regimes.

\subsection{Aggregate reward performance and overall trends}
\label{sec:results_aggregate}
Table~\ref{tab:results} summarizes mean episode reward (mean $\pm$ standard deviation across seeds) for each environment, method, and heterogeneity level. Figures~\ref{fig:scaling} and~\ref{fig:performance} visualize the same results to highlight trends as heterogeneity increases.

A consistent pattern across domains is that increasing heterogeneity alters both the statistics and semantics of local sensing (e.g., modality loss, range reduction, ray-count reduction, and noise), inducing a distribution shift relative to the shared-policy pretraining regime. Under such shifts, a frozen shared policy can degrade, and the relative effectiveness of interface-level mitigation depends on the task structure and on the informativeness of the shaped reward.

In the warehouse domain, observation normalization yields the most robust reward profile across H0--H3, remaining between $268.9$ and $270.8$ reward units, whereas the frozen shared policy remains between $251.6$ and $254.7$ (Table~\ref{tab:results}). This improvement indicates that correcting input statistics can partially compensate for sensor mismatch in dense navigation-and-delivery settings. In collaborative mapping, mean reward is comparatively stable across heterogeneity levels for the strongest baselines: the frozen shared policy achieves approximately $4.77\times10^3$ reward units across H0--H3, with local fine-tuning close behind; DC-Ada remains within the same order and stays within roughly $1\%$ of these values despite relying on reward-only updates. In search-and-rescue, the ranking varies with heterogeneity: observation normalization is strongest under H0 and H2 (approximately $17.7$ and $17.2$), while the frozen shared policy is strongest under H1 and H3 (approximately $16.5$ and $16.1$). Local fine-tuning yields stable but lower shaped reward in this domain under the chosen schedule, underscoring that optimizing a front-end transform by gradients does not necessarily dominate strong non-adaptive baselines when the shared coordination policy is held fixed.

\input{tables/results_table_access.tex}

\subsection{Scaling with heterogeneity}
\label{sec:results_scaling}
Figure~\ref{fig:scaling} reports reward as a function of heterogeneity level for each domain. The warehouse task shows a mild degradation for most methods as heterogeneity increases. Observation normalization maintains the highest reward throughout, indicating that in this domain the dominant failure mode induced by heterogeneity is consistent with distributional shifts that can be partially mitigated by stabilizing the input scale. DC-Ada is roughly comparable to the frozen shared-policy baseline in warehouse reward, but it does not close the gap to observation normalization under the present shaping and budget; this suggests that, for this discrete pickup-and-delivery task, scalar-return interface search is less effective than direct input-statistics stabilization.

In search-and-rescue, the effect of heterogeneity is more pronounced and method-dependent. The shared policy peaks at H1 and then declines toward H3, while local fine-tuning remains robust in both reward and success (Sec.~\ref{sec:results_success}). Observation normalization exhibits a non-monotonic pattern: it underperforms under H1 but becomes the strongest reward baseline under H3. This non-monotonicity is consistent with heterogeneity changing which modalities dominate the effective observation and highlights that ``robustness'' is not purely a function of increasing heterogeneity, but of which specific sensing profiles occur at each level.

In collaborative mapping, reward varies less dramatically across H0--H3 in absolute terms, but the method ordering is stable. The frozen shared policy provides the highest reward at every heterogeneity level, with local fine-tuning consistently close behind; DC-Ada remains competitive in reward while delivering the strongest completion performance. Observation normalization is consistently weaker in mapping and collapses under H3 in success (Sec.~\ref{sec:results_success}), indicating that normalizing inputs alone does not necessarily preserve the spatial exploration and coverage behaviors required for completion under the mapping threshold.

\begin{figure*}[ht]
\centering
\includegraphics[width=0.30\textwidth]{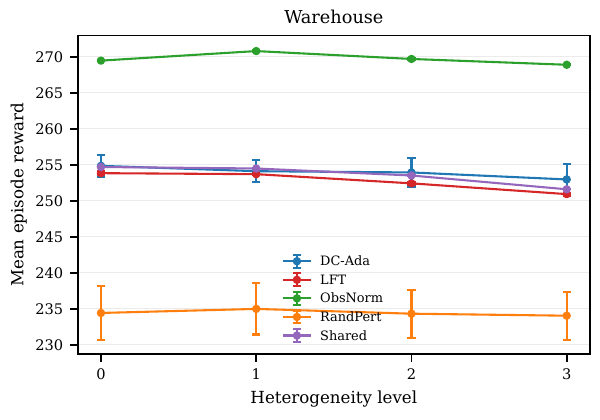}
\includegraphics[width=0.30\textwidth]{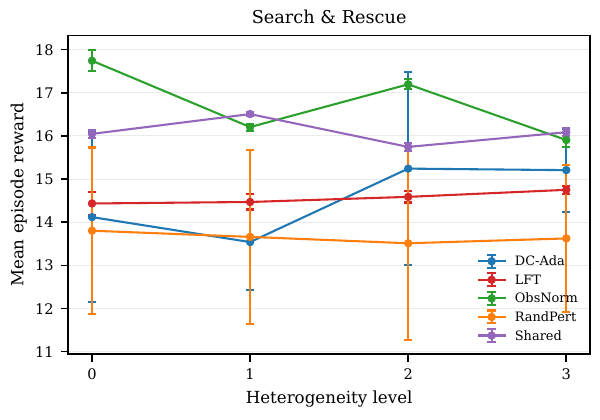}
\includegraphics[width=0.30\textwidth]{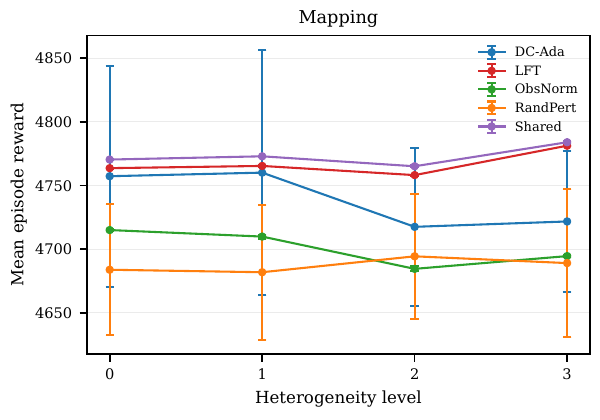}
\caption{Scaling with heterogeneity level (H0--H3) for the three tasks. Curves show mean reward under a fixed interaction budget ($B{=}200{,}000$ joint environment steps). A flatter slope indicates greater robustness to sensor mismatch for that task.}
\label{fig:scaling}
\end{figure*}

\begin{figure*}[ht]
\centering
\includegraphics[width=0.30\textwidth]{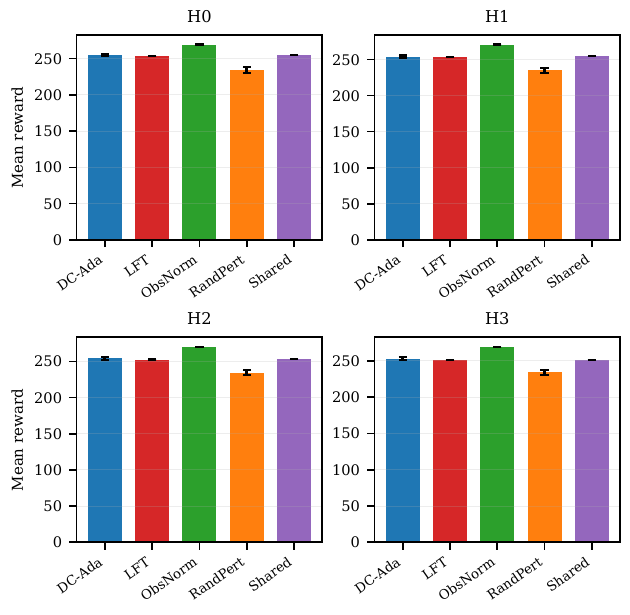}
\includegraphics[width=0.30\textwidth]{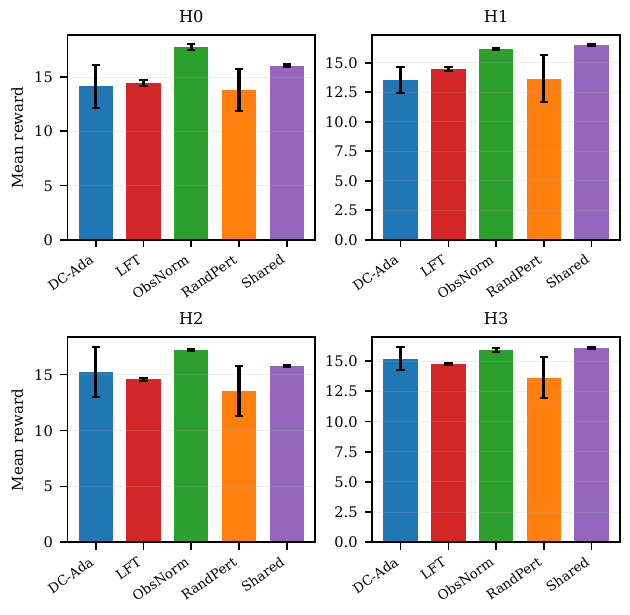}
\includegraphics[width=0.30\textwidth]{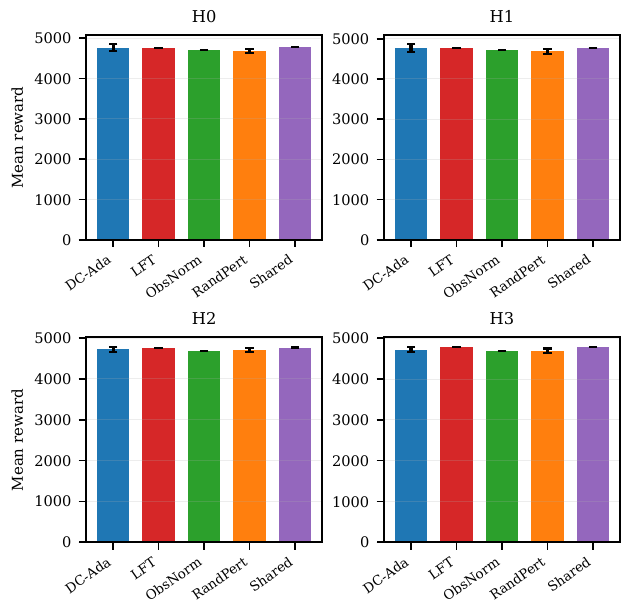}
\caption{Compact per-domain reward summary by method and heterogeneity level (H0--H3). Bars are averaged over five seeds under the same matched budget of $B{=}200{,}000$ joint environment steps. This figure complements Fig.~\ref{fig:scaling} by making within-level method ranking explicit.}
\label{fig:performance}
\end{figure*}

\subsection{Success rates and task completion}
\label{sec:results_success}
While reward captures shaped progress, success rate reflects whether an episode meets an explicit completion threshold (Table~\ref{tab:setup}). Figure~\ref{fig:success_rate} reports success rates across methods and heterogeneity levels for all three domains.

Three observations are immediate. First, completion is highly task-dependent: warehouse completion is rare for all methods (on the order of $1\%$--$2\%$ under the two-delivery threshold even with the extended budget), search-and-rescue exhibits moderate completion (roughly $10\%$--$15\%$ for the two-rescue threshold), and mapping exhibits low-to-moderate completion but clear method separation. Second, the method that maximizes reward does not necessarily maximize success. In mapping, the frozen shared policy attains the highest reward across heterogeneity levels, yet DC-Ada attains the highest success rates by a large margin. Third, heterogeneity affects success differently by domain: warehouse success remains uniformly low across H0--H3, search-and-rescue success varies modestly with heterogeneity and method, and mapping success under DC-Ada increases with heterogeneity, indicating that interface-level adaptation is particularly beneficial when completion depends on sustained coverage rather than a small number of discrete events.

\begin{figure*}[ht]
\centering
\includegraphics[width=0.30\textwidth]{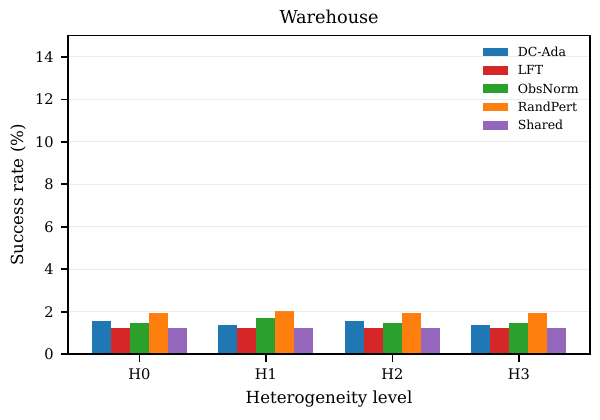}
\includegraphics[width=0.30\textwidth]{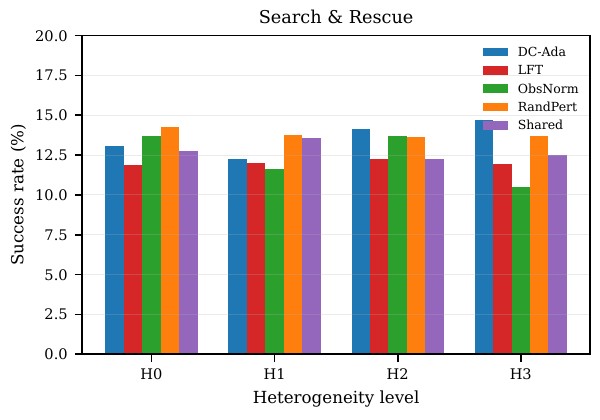}
\includegraphics[width=0.30\textwidth]{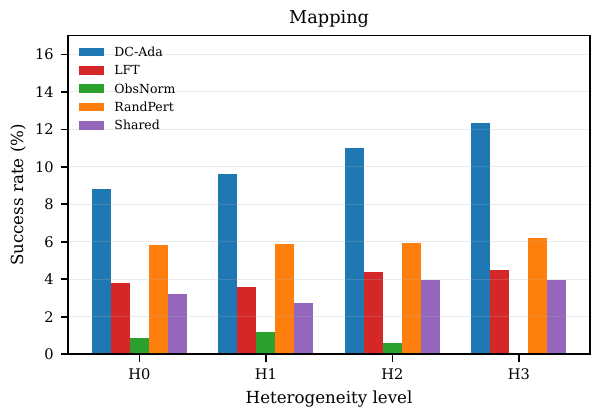}
\caption{Success rate by method and heterogeneity level (H0--H3) using the thresholds in Table~\ref{tab:setup}. Success is a binary completion metric; Sec.~\ref{sec:results_h3} reports continuous progress metrics and threshold sensitivity under severe heterogeneity.}
\label{fig:success_rate}
\end{figure*}

\subsection{Per-environment results and interpretation}
\label{sec:results_envwise}
\textbf{Warehouse.}
Under the default completion criterion (deliver at least two packages), warehouse success remains rare across all methods and heterogeneity levels, ranging from approximately $1.2\%$ to $2.1\%$ across the sweep (Fig.~\ref{fig:success_rate}). Random perturbation attains the highest completion in this regime (approximately $2.0\%$ under H0/H3), with observation normalization and DC-Ada close behind. Reward reveals clearer separation: observation normalization achieves the highest and most stable reward (approximately $269$--$271$ across H0--H3), improving over the frozen shared policy (approximately $252$--$255$) by roughly $14$--$18$ reward units (Table~\ref{tab:results}). The persistence of low success despite higher reward indicates that completion is dominated by rare sequences of discrete pickup-and-delivery events within the horizon. This interpretation is reinforced by the H3 threshold sensitivity analysis (Sec.~\ref{sec:results_h3}), which shows that delivering at least one package occurs in roughly $10\%$--$15\%$ of episodes, while completing two deliveries occurs in only $1.2\%$--$2.0\%$ of episodes, and three deliveries are not observed in the sweep.

\textbf{Search-and-Rescue.}
Search-and-rescue exhibits moderate completion rates and a clearer link between continuous progress and success than warehouse logistics. Under the two-rescue threshold, success typically falls in the $10\%$--$15\%$ range across H0--H3 (Fig.~\ref{fig:success_rate}). Under severe heterogeneity (H3), DC-Ada attains the highest success ($14.7\%$), compared to $13.7\%$ for random perturbation and $12.5\%$ for the frozen shared policy. In terms of shaped reward, however, observation normalization and the frozen shared policy remain strongest (Table~\ref{tab:results}), indicating that reward and completion capture distinct aspects of behavior under a fixed horizon. Continuous progress under H3 further clarifies the regime: the mean rescue ratio is approximately $0.29$--$0.31$, implying roughly $0.58$--$0.62$ rescues per episode against a target of two. Threshold sensitivity (Sec.~\ref{sec:results_h3}) shows that achieving at least one rescue occurs in roughly $46\%$--$49\%$ of H3 episodes, while achieving two rescues requires shifting the tail of the distribution and achieving three rescues is extremely rare. These results suggest that the success threshold is stringent relative to the horizon and that methods differ primarily in their ability to convert episodes with partial progress into episodes that complete the second discrete event.

\textbf{Collaborative Mapping.}
Collaborative mapping is the domain in which DC-Ada provides the clearest completion advantage. Across H0--H3, DC-Ada achieves the highest success rate for every heterogeneity level, increasing from $8.8\%$ (H0) to $12.4\%$ (H3), compared to $2.7\%$--$4.0\%$ for the frozen shared policy and $3.6\%$--$4.5\%$ for local fine-tuning (Fig.~\ref{fig:success_rate}). Under H3, DC-Ada attains $12.35\%$ success versus $6.21\%$ for random perturbation and $3.97\%$ for the shared policy, demonstrating that reward-only interface adaptation can substantially shift completion even when the coordination policy is fixed. In terms of continuous progress, DC-Ada also yields the highest mean coverage under H3 ($0.675$), compared to approximately $0.658$--$0.659$ for the strongest baselines, while the success threshold is $0.75$ (Table~\ref{tab:progress_h3}). The threshold sensitivity curves (Sec.~\ref{sec:results_h3}) show that DC-Ada's advantage is not confined to a single threshold: under H3, DC-Ada improves completion probability across a range of coverage thresholds (e.g., at $\tau{=}0.65$, $68.8\%$ for DC-Ada vs. $57.2\%$ for the shared policy; at $\tau{=}0.70$, $31.8\%$ vs. $22.8\%$). Notably, the frozen shared policy achieves the highest reward in mapping across heterogeneity levels (Table~\ref{tab:results}), yet it does not maximize thresholded success, highlighting that shaped reward and the completion predicate can emphasize different aspects of performance under fixed horizons.

To make this central mapping interpretation more explicit, Table~\ref{tab:mapping_contrasts} reports selected seed-paired contrasts for H3. The reward contrast favors the shared policy on average but has a wider interval that overlaps zero, whereas the success-rate and coverage contrasts favor DC-Ada with intervals entirely above zero. This compact summary reinforces the main empirical point: in collaborative mapping, DC-Ada's advantage is completion-oriented rather than reward-dominant.

\input{tables/mapping_contrasts_table.tex}

\subsection{Heatmap diagnostics across methods and heterogeneity}
\label{sec:results_heatmaps}
Figures~\ref{fig:heatmap_reward} and~\ref{fig:heatmap_success} provide a compact cross-method view of robustness across heterogeneity levels. In warehouse, the reward heatmap shows observation normalization as the most stable and highest-performing row, consistent with the scaling curves in Fig.~\ref{fig:scaling}. In mapping, reward remains comparatively flat across heterogeneity for the strongest baselines (shared policy and local fine-tuning), whereas success separates methods sharply: DC-Ada forms the strongest row across H0--H3, while observation normalization collapses to near-zero completion under H3. In search-and-rescue, both reward and success exhibit moderate variation with heterogeneity and method, with DC-Ada improving completion in the more challenging regimes. Overall, the heatmaps emphasize that robustness conclusions depend on both shaped reward and thresholded completion, and motivate the H3 progress and threshold-sensitivity analyses in Sec.~\ref{sec:results_h3}.

\begin{figure*}[ht]
\centering
\includegraphics[width=0.30\textwidth]{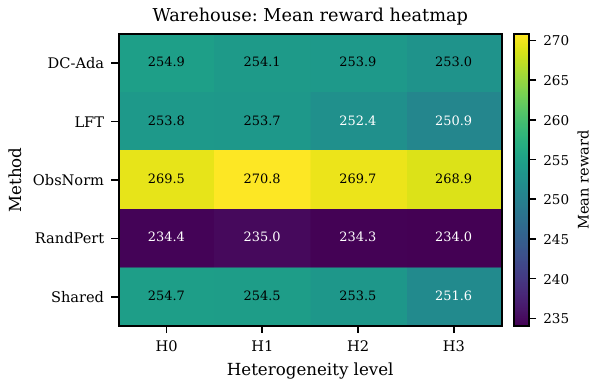}
\includegraphics[width=0.30\textwidth]{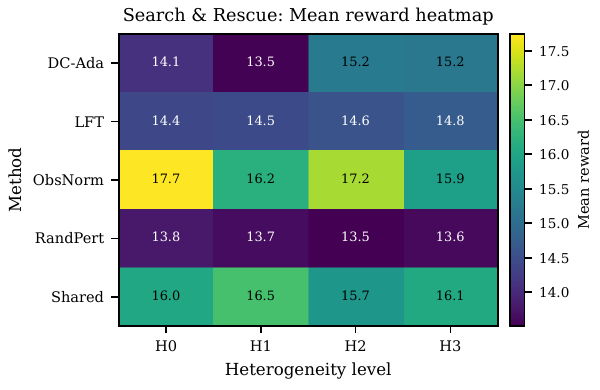}
\includegraphics[width=0.30\textwidth]{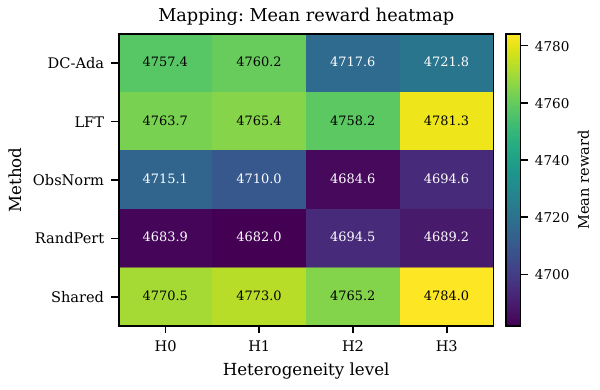}
\caption{Reward heatmaps across methods and heterogeneity levels. Uniform color across a row indicates robustness across H0--H3 for that environment.}
\label{fig:heatmap_reward}
\end{figure*}

\begin{figure*}[ht]
\centering
\includegraphics[width=0.30\textwidth]{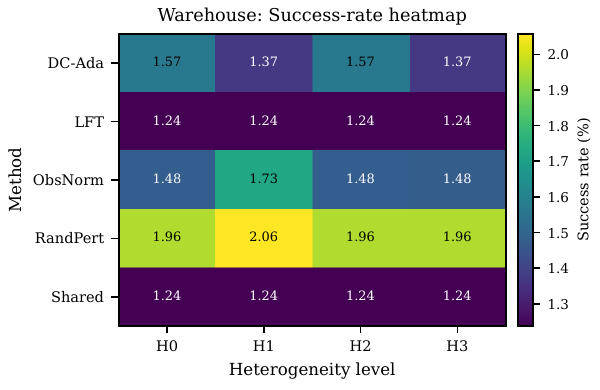}
\includegraphics[width=0.30\textwidth]{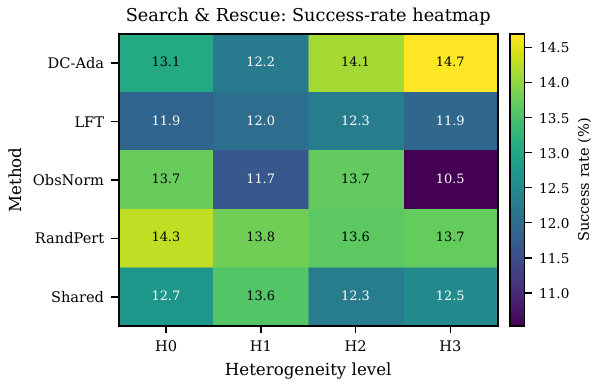}
\includegraphics[width=0.30\textwidth]{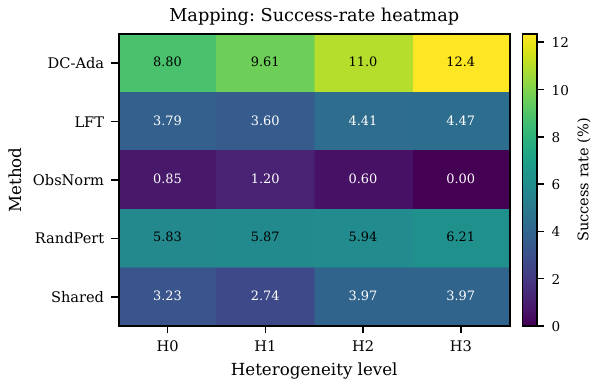}
\caption{Success-rate heatmaps across methods and heterogeneity levels using the thresholds in Table~\ref{tab:setup}. Sec.~\ref{sec:results_h3} contextualizes these values using continuous progress and threshold sensitivity under H3.}
\label{fig:heatmap_success}
\end{figure*}

\subsection{Severe heterogeneity (H3): progress and threshold sensitivity}
\label{sec:results_h3}
Under severe heterogeneity, thresholded success can be difficult to interpret in isolation. Table~\ref{tab:progress_h3} therefore reports continuous progress metrics together with success for H3, including delivery ratios (warehouse), rescue ratios (search-and-rescue), and coverage (mapping). These metrics clarify how far episodes typically are from the completion threshold. For example, under H3 the mean warehouse delivery ratio remains below $0.09$ even for the strongest methods, corresponding to fewer than $0.18$ deliveries per episode against a target of two; under H3 search-and-rescue the mean rescue ratio is approximately $0.29$--$0.31$, corresponding to approximately $0.58$--$0.62$ rescues per episode against a target of two; and under H3 mapping the mean coverage is approximately $0.62$--$0.68$ against a threshold of $0.75$.

Figure~\ref{fig:threshold_sensitivity} complements these summaries by sweeping the success threshold. In warehouse and search-and-rescue, the sensitivity curves show that the dominant mass of the completion distribution lies below the ``two-event'' threshold (two deliveries or two rescues), and that thresholds beyond two are not attained in the sweep. In mapping, the sensitivity curves show a smoother decay in completion probability as the coverage threshold increases; DC-Ada consistently shifts this curve upward over the shared-policy baseline across a broad range of thresholds (e.g., $\tau \in [0.60,0.75]$ under H3), which supports the interpretation that DC-Ada improves completion probability in a threshold-robust manner rather than only at a single operating point.

\input{tables/progress_success_H3.tex}

\begin{figure*}[ht]
\centering
\includegraphics[width=0.30\textwidth]{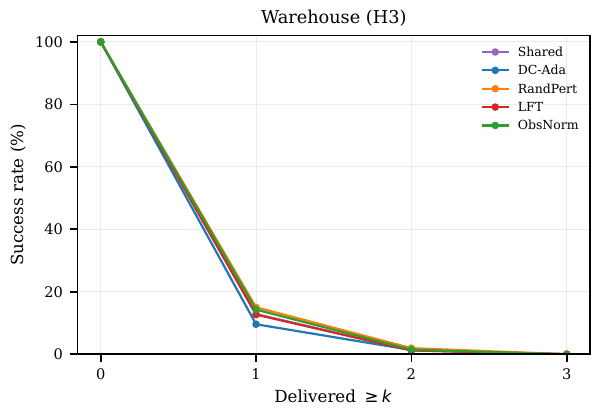}
\includegraphics[width=0.30\textwidth]{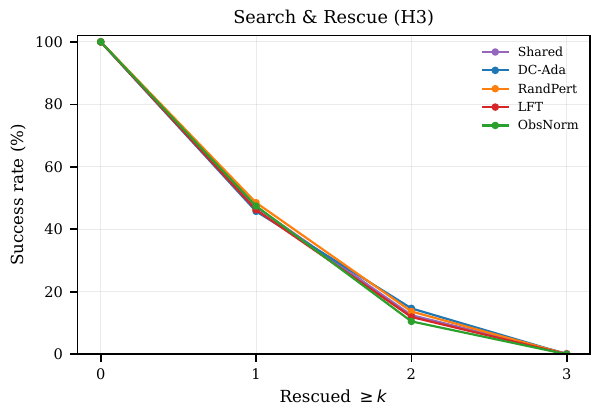}
\includegraphics[width=0.30\textwidth]{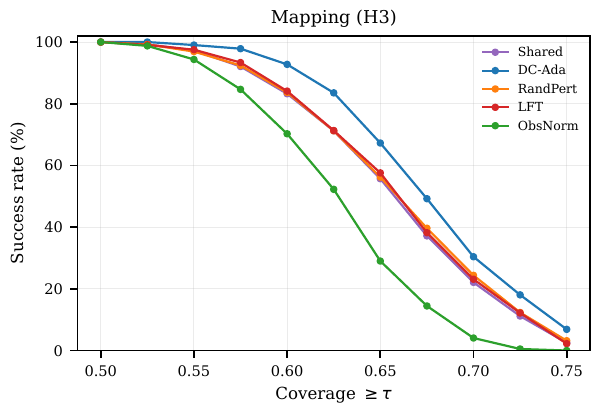}
\caption{Threshold sensitivity under severe heterogeneity (H3). Warehouse: delivered $\ge k$; Search-and-Rescue: rescued $\ge k$; Mapping: coverage $\ge \tau$. Curves indicate how completion probability varies with task strictness and contextualize rare-event success rates.}
\label{fig:threshold_sensitivity}
\end{figure*}

\subsection{Targeted ablations, observation-stress tests, and extended-seed confirmation}
\label{sec:results_targeted}
The main sweep provides a uniform five-seed comparison across all environments and heterogeneity levels. To strengthen the interpretation of the severe-heterogeneity results without changing that balanced main design, we add three targeted H3 checks using the same frozen policies and matched $200{,}000$-step budget. The first suite ablates common-random-number (CRN) evaluation, the number of candidate perturbations, the conservative accept/reject rule, and the truncated candidate-rollout horizon. The second suite evaluates clean H3 alongside additional observation artifacts---Gaussian observation noise, random feature dropout, a two-step sensing delay, and a mild combined perturbation. These two suites focus on mapping and search-and-rescue because the main sweep shows the clearest method- and metric-dependent behavior in those domains. The third check adds five new clean H3 seeds for each environment, yielding a ten-seed confirmation across warehouse logistics, search-and-rescue, and collaborative mapping. These follow-up studies are not a replacement for the balanced main sweep; they clarify design choices, robustness, and severe-heterogeneity reliability around the main findings while preserving the all-environment main comparison as the primary benchmark.

Table~\ref{tab:ablation_h3} reports the DC-Ada design ablations. In mapping, full DC-Ada achieves the highest completion rate among the tested variants (12.4\%) and the highest mean coverage, whereas removing CRN or using a single candidate reduces success to approximately 5.9\% and 5.8\%, respectively. The $M{=}4$ and always-best variants recover part of the completion improvement, but remain below full DC-Ada; shorter and longer candidate rollouts increase shaped reward but reduce completion probability. This confirms that the main mapping gain is not simply due to arbitrary perturbation: CRN-based comparison, multi-candidate selection, and conservative acceptance jointly support completion-oriented adaptation. In search-and-rescue, the picture is more mixed. Full DC-Ada improves success over the shared-policy baseline, but $M{=}4$ and the short-rollout variant achieve slightly higher success, while the long-rollout and no-CRN variants degrade. This reinforces the broader conclusion that DC-Ada's best settings are task-dependent rather than universally optimal.

\input{tables/ablation_h3_results.tex}

Table~\ref{tab:robustness_h3} reports the observation-stress suite. In mapping, DC-Ada retains the highest success rate under every tested artifact, with success remaining around 10.6\%--12.4\% compared with approximately 2.5\%--4.0\% for the shared policy and near-zero completion for observation normalization. Coverage shows the same qualitative pattern. In search-and-rescue, the stress results are more scenario-dependent: DC-Ada is strongest under the clean and delay conditions, but observation normalization is strongest under Gaussian noise, dropout, and the mild combined perturbation. These results strengthen the mapping-specific robustness claim while preserving the central paper-level interpretation: observation-interface adaptation is useful under severe sensing mismatch, but the best mitigation strategy still depends on the task, metric, and perturbation type.

\input{tables/robustness_h3_results.tex}

Table~\ref{tab:extra_seed_h3} reports the extended-seed confirmation. The all-environment tables above retain the original five-seed H0--H3 sweep because that sweep is balanced across every task and heterogeneity level; the extended table is a clean-H3 reliability check for the five main methods across all three environments. In warehouse, the additional seeds preserve the main-sweep trend: observation normalization remains strongest in shaped reward, while all methods continue to show low two-delivery completion rates. Random perturbation attains the highest rare completion/progress in this check, but with a large reward penalty, so it should not be interpreted as the strongest warehouse method overall. In search-and-rescue, the extended set reinforces the non-universal pattern: the shared policy remains strongest in reward, while random perturbation and DC-Ada are the strongest success-oriented methods within a narrow range. In mapping, the additional seeds preserve the central conclusion: DC-Ada remains the strongest method for completion and coverage over ten seeds, while the shared policy remains strongest in shaped reward. Thus, the extra seeds improve reporting symmetry and strengthen the paper's broader conclusion that the best adaptation strategy depends on task structure and metric rather than confirming a uniformly dominant method.

\input{tables/extra_seed_h3_results_all_envs.tex}

\subsection{Runtime, episode counts, and scalar-feedback proxy}
\label{sec:results_overhead}
Table~\ref{tab:overhead} reports measured runtime and scalar-feedback accounting under the fixed $B{=}200{,}000$-step budget. Because DC-Ada allocates interaction to short candidate evaluations (Alg.~\ref{alg:dcada}), it executes substantially fewer full-length episodes per run than baselines that use only nominal rollouts. Averaged over environments and heterogeneity levels, DC-Ada executes approximately $104$ full episodes per run, whereas non-adaptive baselines execute approximately $416$--$425$ episodes per run; local fine-tuning executes approximately $331$ episodes per run. This gap is an expected consequence of charging candidate rollouts and gradient rollouts against the same environment-step budget, and should be interpreted as a trade-off between using interaction for interface optimization versus using interaction purely for nominal episode execution.

Measured wall-clock time remains comparable across methods because all methods are budget-matched in environment steps: the median per-run time is approximately $11$--$12$ minutes on the evaluation hardware, with throughput on the order of $275$--$295$ steps/s (Table~\ref{tab:overhead}). The final column is a simple experiment-runner proxy: each rollout contributes one 64-bit scalar team-return value used for evaluation or logging. It should therefore be read as a low-bandwidth accounting convention rather than as literal on-wire traffic required by every baseline during deployment. Even under this conservative convention, the total volume remains only a few kilobytes per run, and DC-Ada avoids exchanging observations, maps, or gradients.

\section{Discussion, Implications, and Limitations}
\label{sec:discussion}

This section consolidates the empirical findings and clarifies the operational regime in which DC-Ada is expected to be effective. The overarching interpretation is that DC-Ada functions as a \emph{deploy-time compatibility layer}: it aims to preserve the coordination behavior encoded by a pretrained shared policy while compensating for heterogeneity-induced shifts in the observation distribution through robot-specific interface adaptation. This framing is intentionally narrower than a claim of universal superiority: the present evidence supports a task-dependent operating point under controlled simulation rather than a blanket replacement for simpler or gradient-based baselines.

\subsection{Key empirical findings and implications}
\label{subsec:disc_takeaways}
Across the three domains, the results support four main takeaways. First, heterogeneity can substantially reduce the performance of a fixed shared policy even when that policy is competent under homogeneous sensing. This indicates that a single shared controller does not automatically confer robustness to sensor mismatch, modality loss, or quality degradation, and motivates deploy-time mechanisms that control the policy's input distribution. Second, no single mitigation strategy dominates across all tasks and metrics. Observation normalization is strongest for reward robustness in warehouse logistics and remains competitive in search-and-rescue; the frozen shared policy is strongest for reward in collaborative mapping, with local fine-tuning close behind; and DC-Ada provides its clearest empirical advantage in mapping success and in some severe-heterogeneity completion metrics. Third, reward and completion need not rank methods in the same order. In mapping, for example, the strongest reward method is not the strongest success method, which shows that shaped return and thresholded completion capture different aspects of behavior. Fourth, the value of interface adaptation is task-structured. When the objective is closely tied to smooth progress, gains can appear in both reward and continuous progress variables; when completion depends on multiple discrete events, improvements may appear first as increased \emph{progress} and only later as increased \emph{success} for a fixed threshold. The targeted H3 suites further show that the design ingredients of DC-Ada matter most clearly in mapping, while search-and-rescue remains more sensitive to the specific perturbation, candidate-evaluation schedule, and seed set. The ten-seed H3 confirmation preserves the mapping completion advantage, confirms warehouse as a reward-normalization-dominated case, and does not convert search-and-rescue into a uniform DC-Ada win. This is consistent with the task-dependent interpretation rather than an algorithm-wide dominance claim.

These observations also carry methodological implications for evaluation. Reporting mean return alone can conceal whether a method shifts the distribution of completion outcomes, while reporting success rate alone can understate meaningful improvements when success is a tail event. Consequently, a comprehensive evaluation should include both thresholded success and continuous progress variables (e.g., delivered/rescued counts and coverage), together with clear documentation of the interaction budget and any additional rollouts used for adaptation.

\subsection{When does DC-Ada help?}
\label{subsec:disc_when_help}
DC-Ada is most appropriate in settings where (i) the learning signal available at runtime is limited to scalar team reward, (ii) gradient information is unavailable or unreliable (for example, due to black-box controllers, proprietary policies, or deployment constraints), and (iii) adaptation must remain lightweight and stable. Under these conditions, adapting only a small observation transform can correct systematic input mismatches while reducing the risk of destabilizing a validated coordination policy. The reported results suggest that this operating regime is particularly relevant when the practical objective is to preserve completion probability or coverage under sensing mismatch, rather than to maximize shaped reward in every domain. This operating regime is complementary to learned communication approaches and centralized coordination methods that rely on richer message passing or centralized training infrastructure \cite{Yang2024TeamComm,zhang2023commnetx}. In particular, when communication bandwidth is constrained, reward-only interface adaptation offers a pragmatic mechanism to improve robustness without introducing a learned inter-robot protocol.

\subsection{Dependence on pretrained policy design}
\label{subsec:policy_design_dependence}
A central caveat is that DC-Ada adapts the observation stream presented to a policy; it does not create new coordination skills absent from that policy. Consequently, the observed behavior depends on the inductive biases, reward shaping, exploration coverage, and architecture of the pretrained shared controller. If the frozen policy never learned a behavior needed for a particular task, or if the shaped reward favors behaviors that are only weakly aligned with the completion predicate, an observation-interface transform can have limited effect. This caveat helps explain the scenario-specific outcomes reported above: DC-Ada is most useful when performance loss is caused by systematic observation mismatch, whereas simpler statistics-based corrections or gradient-based transform updates can be preferable when the dominant issue is feature scaling, reward shaping, or policy-capability limitations.

\subsection{Failure modes and trade-offs}
\label{subsec:disc_failure_modes}
DC-Ada is conservative by design, and its limitations follow directly from that conservatism. The main observed failure cases in the present evaluation are informative. In warehouse logistics, observation normalization is clearly stronger in reward, indicating that the dominant heterogeneity effect is largely input-statistical rather than a case where reward-only search over transforms is sample-efficient. In collaborative mapping, DC-Ada improves completion-sensitive metrics but does not maximize shaped reward, showing that interface adaptation can shift completion probability without necessarily optimizing every component of the reward. Finally, in discrete-event domains such as warehouse and search-and-rescue, thresholded success can remain low even when progress metrics improve, because the success thresholds lie in the tail of the completion distribution.

\paragraph{Settings where simpler or gradient-based baselines may be preferable.}
When heterogeneity primarily manifests as a stable change in feature scale or normalization, simple observation normalization can be sufficient and can outperform learned adaptation in reward-centric regimes, as seen in warehouse logistics and parts of search-and-rescue. Conversely, when gradients through the frozen policy remain stable and compute is available, local fine-tuning can optimize transform parameters more directly and remain highly competitive in reward, which is reflected by its near-top mapping results. These cases do not weaken the motivation for DC-Ada; they clarify that algorithm choice should follow the available feedback, compute, and deployment constraints.

\paragraph{Sparse, delayed, or weakly informative reward.}
When the reward provides limited information about incremental progress, zeroth-order accept/reject updates may require many evaluations to identify beneficial directions, and improvements under fixed interaction budgets may be modest. In such regimes, even minimal auxiliary signals that are still communication-light (e.g., a small set of progress counters) can be substantially more sample-efficient than reward alone.

\paragraph{Strong coupling among robots.}
DC-Ada adapts transforms per robot, while the objective is the team return. If the task requires tightly coupled, coordinated changes across multiple robots, coordinate-like updates can stall or exhibit oscillatory behavior. Common-random-number (CRN) evaluation and conservative acceptance margins reduce spurious updates, but coupling remains a structural challenge for decentralized adaptation under scalar feedback.

\paragraph{Interaction and compute overhead.}
DC-Ada expends interaction to evaluate candidate transforms. This is a principled robustness--efficiency trade-off, but it implies that under a fixed interaction budget DC-Ada may execute fewer full-length episodes than a non-adaptive shared-policy baseline. For this reason, the paper reports interaction budgeting and runtime/throughput explicitly, rather than assuming identical episode counts across methods.

\paragraph{Transform expressiveness versus stability.}
More expressive transforms can better compensate for heterogeneity but also risk shifting inputs outside the distribution on which the shared policy was trained. This motivates identity-preserving initialization and bounded, conservative updates. Conceptually, DC-Ada is not intended to synthesize new behaviors; it is intended to maintain the pretrained policy in a regime where its behaviors remain valid under heterogeneous sensing.

\subsection{Success metrics, rare events, and task structure}
\label{subsec:disc_success_metrics}
A recurring phenomenon in heterogeneous multi-robot evaluation is that thresholded success can remain rare even when progress improves materially. This occurs because success is typically defined as a thresholded indicator of an underlying progress variable at episode end. Let $z_T$ denote an episode-level progress variable such as deliveries completed, rescues completed, or coverage achieved at the terminal time $T$, and let $\tau$ denote the corresponding success threshold. Then success can be expressed as:
\begin{equation*}
S_{\tau} \;=\; \mathbb{I}\{z_T \ge \tau\}.
\end{equation*}
Under severe heterogeneity, the distribution of $z_T$ may shift to lower values, and an improvement in $\mathbb{E}[z_T]$ does not necessarily yield a commensurate increase in $\Pr[z_T \ge \tau]$ when $\tau$ lies in the tail. This motivates the reporting strategy adopted in this paper: success rate is reported alongside continuous progress metrics, and sensitivity analyses with respect to $\tau$ are used to contextualize success under different completion thresholds. Interpreting these signals jointly is particularly important for tasks that require multiple discrete subgoals before any episode counts as successful.

\subsection{Simulator fidelity and external validity}
\label{subsec:disc_fidelity}
The simulator used in this work is intentionally lightweight (pure NumPy) to promote portability, controlled ablations, and broad reproducibility. The observation-stress suite adds Gaussian noise, dropout, sensing delay, and a mild combined perturbation, which provides an additional check that the H3 mapping findings are not limited to a perfectly clean observation stream. Nevertheless, the simulator does not capture the full set of effects present in high-fidelity robotics simulation or physical deployment. In particular, richer physics and sensing can introduce contact dynamics, actuation limits and latency, wheel slip, asynchronous sensor update rates, calibration drift, and perception artifacts beyond those tested here. These factors can influence both the severity of heterogeneity-induced degradation and the stability of online adaptation.

A natural extension is to evaluate DC-Ada in higher-fidelity simulators and on physical robot platforms while preserving the fixed-interface adaptation principle. In addition, integrating interface adaptation with navigation stacks that incorporate planning, SLAM, and mapping back-ends raises substantive system questions about where adaptation is best applied and which intermediate signals should be exposed. DC-Ada is compatible with these stacks in principle, either as an adapter on raw/encoded sensor features or as an adapter on intermediate representations produced by perception or mapping modules \cite{carreno2022planning,chang2021kimera,chen2022slam,goarin2024graph}. Establishing these integrations rigorously is an important step toward deployment-oriented validation.

\section{Conclusion}
\label{sec:conclusion}

This work presented \textbf{DC-Ada}, a reward-only decentralized observation-interface adaptation method for heterogeneous multi-robot teams. The method freezes a pretrained shared policy and adapts only per-robot observation transforms using a conservative accept/reject zeroth-order procedure. This design targets a practically relevant regime in which gradient information is unavailable, communication is constrained, and stability is prioritized: rather than retraining coordination policies or learning message-passing protocols, DC-Ada mitigates heterogeneity by correcting the policy input distribution at the interface.

Empirically, the results show that heterogeneous sensing can significantly degrade fixed shared policies, but they do not indicate a universal winner across all domains and metrics. Instead, they show that the most effective mitigation strategy is task-dependent. Observation normalization can be strongest in some reward-centric regimes, gradient-based baselines can remain highly competitive when gradients are stable, and DC-Ada offers a complementary operating point with its clearest gains in mapping completion under severe heterogeneity while maintaining reward-only, communication-light adaptation. The targeted ablations show that CRN evaluation, multi-candidate search, and conservative acceptance are important for DC-Ada's mapping-completion gains; the observation-stress tests show that those gains persist under several additional observation artifacts; and the extended ten-seed H3 confirmation preserves the mapping success/coverage trend while showing that warehouse remains dominated by observation normalization in reward and that search-and-rescue remains sensitive to seed and baseline choice. The analysis also clarifies that success rates may remain low under strict completion thresholds even when progress improves, particularly in tasks that require multiple discrete events to declare success. For this reason, the paper reports thresholded success alongside continuous progress variables and provides sensitivity analyses that characterize how completion probability varies with the success threshold.

Beyond the algorithmic contribution, this paper provides a reproducible evaluation pipeline that makes heterogeneity measurable, including a fixed-dimensional observation interface, transparent success definitions, and runtime/interaction accounting. At the same time, because the present study uses a lightweight two-dimensional simulator, the conclusions should be read as controlled evidence about the deploy-time interface-adaptation principle rather than as final deployment validation.

Several research directions follow naturally from this study. First, evaluating DC-Ada under higher-fidelity physics and sensor models, and ultimately on physical robot platforms, is essential to quantify robustness under latency, asynchrony, and real perception artifacts. Second, the transform family can be broadened to structured adapters (e.g., modality gating or low-rank parameterizations) while maintaining stability constraints that preserve the validity of the pretrained policy. Third, adaptation schedules can be made state- or event-dependent to reduce unnecessary evaluations and improve responsiveness to abrupt failures. Finally, it is of interest to develop tighter theoretical characterizations for conservative accept/reject adaptation under truncated rollouts and coupled multi-agent objectives, including explicit bounds on false accept probability and the role of common-random-number evaluations in reducing selection variance. These directions preserve the central principle of DC-Ada---communication-light, reward-only adaptation at the observation interface---while strengthening its applicability to realistic multi-robot deployments.

%\section*{Acknowledgment}
%To be updated.

%\bibliographystyle{IEEEtran}%unsrt} 
%\bibliography{references}
% Generated by IEEEtran.bst, version: 1.14 (2015/08/26)

\begin{IEEEbiographynophoto}{Saad Alqithami} received the M.Sc. (2012) and Ph.D. (2016) degrees in Computer Science from Southern Illinois University Carbondale, USA. He is currently an Associate Professor in the Department of Computer Science at Al-Baha University, Saudi Arabia. His research interests include artificial intelligence, agents \& multi-agent systems, network science, social network analysis, and applied machine learning for resilient systems, with applications spanning healthcare, sustainability, and tourism. He has led funded research and innovation projects, and he actively serves the research community through editorial and peer-review activities.
%is an Associate Professor of Computer Science at Al-Baha University, Saudi Arabia, a position he has held since 2022. He earned his PhD and MSc degrees in Computer Science from Southern Illinois University, Carbondale, United States. Dr. Alqithami's research spans the fields of artificial intelligence, network science, multiagent systems, and social networks, with a strong focus on their applications in health informatics, innovative computational models, and intelligent systems. Dr. Alqithami has an extensive publication record in high-impact, peer-reviewed journals and has presented his work at leading international conferences. His research contributions include advancements in AI-driven predictive models, network analysis techniques, and the development of multiagent frameworks to address complex, real-world problems. In addition to his academic research, Dr. Alqithami is deeply involved in promoting interdisciplinary collaboration. He actively reviews grant proposals, serves on editorial boards of esteemed journals in artificial intelligence and computer science, and mentors students and early-career researchers. His leadership roles in academic and professional settings reflect his commitment to fostering innovation and driving the evolution of AI to address societal challenges.
\end{IEEEbiographynophoto}

\EOD
\end{document}

%% file: tables/obs_layout_table.tex
\begin{table}[t]
\centering \scriptsize
\caption{Fixed-layout observation interface. All robots receive the same observation dimension within a task, regardless of sensor suite. Missing modalities are filled with neutral defaults. LiDAR rays are resampled to a fixed segment length of $16$.}
\label{tab:obs_layout}
\resizebox{\linewidth}{!}{
\setlength{\tabcolsep}{4pt}
\begin{tabular}{@{}lccl@{}}
\toprule
Task & \shortstack{Extra\\ features $k$} & \shortstack{Total\\ dim $d$} & Layout \\
\midrule
Warehouse & 5 & 73 & $[p(2),\, v(2),\, k,\, \mathrm{lidar}(16),\, \mathrm{rgb}(32),\, \mathrm{depth}(16)]$ \\
Search \& Rescue & 3 & 71 & $[p(2),\, v(2),\, k,\, \mathrm{lidar}(16),\, \mathrm{rgb}(32),\, \mathrm{depth}(16)]$ \\
Mapping & 2 & 70 & $[p(2),\, v(2),\, k,\, \mathrm{lidar}(16),\, \mathrm{rgb}(32),\, \mathrm{depth}(16)]$ \\
\bottomrule
\end{tabular}}
\end{table}

%% file: tables/heterogeneity_table.tex
\begin{table}[t]
\centering \scriptsize
\caption{Sensor heterogeneity configurations (H0--H3) used to instantiate the four robots. When a modality is absent, the corresponding observation segment is filled with neutral defaults (zeros for RGB; ones for LiDAR/depth distance-like segments) to preserve a fixed observation layout across heterogeneity levels. Camera field-of-view (FOV) is $60^\circ$ unless specified (H1).}
\label{tab:heterogeneity}
\begin{tabular*}{\linewidth}{@{\extracolsep{\fill}}llcccc@{}}
\toprule
\textbf{Level} & \textbf{Robot} & \textbf{LiDAR} & \textbf{Range} & \textbf{Rays} & \textbf{RGB / Depth (FOV)} \\
\midrule
H0 & R1--R4 & yes & 5.0 & 16 & RGB+Depth ($60^\circ$) \\
\midrule
\multirow{4}{*}{H1}   & R1 & yes & 4.0 & 16 & RGB+Depth ($50^\circ$) \\
   & R2 & yes & 5.0 & 16 & Depth only ($60^\circ$) \\
   & R3 & yes & 6.0 & 16 & RGB+Depth ($70^\circ$) \\
   & R4 & yes & 4.0 & 16 & Depth only ($80^\circ$) \\
\midrule
\multirow{4}{*}{H2}   & R1 & yes & 6.0 & 16 & RGB only ($60^\circ$) \\
   & R2 & yes & 4.0 & 24 & Depth only ($60^\circ$) \\
   & R3 & no & -- & -- & RGB+Depth ($60^\circ$) \\
   & R4 & yes & 5.0 & 16 & RGB+Depth ($60^\circ$) \\
\midrule
\multirow{4}{*}{H3}   & R1 & yes & 8.0 & 32 & none \\
   & R2 & no & -- & -- & RGB only ($60^\circ$) \\
   & R3 & no & -- & -- & Depth only ($60^\circ$) \\
   & R4 & yes & 3.0 & 8 & RGB+Depth ($60^\circ$) \\
\bottomrule
\end{tabular*}
\end{table}

%% file: tables/hyperparams_table_access.tex
\begin{table}[t]
\centering \scriptsize
\caption{Key hyperparameters used in the final reported sweep. ``--'' indicates the parameter is not applicable. The shared policy is frozen during adaptation, and all methods respect the same matched interaction budget of $B{=}200{,}000$ joint environment steps per run.}
\label{tab:hyperparams}
\begin{tabular*}{\linewidth}{@{\extracolsep{\fill}}p{.25\linewidth}ccccc@{}}
\toprule
Hyperparameter & DC-Ada & \shortstack{Random\\ Perturb.} & \shortstack{Obs.\\ Norm.} & \shortstack{Local \\Fine-Tune} & \shortstack{Shared\\ Policy} \\
\midrule
Update interval $K$ (episodes) & 3 & 3 & -- & 1 & -- \\
Candidates $M$ (per update) & 8 & -- & -- & -- & -- \\
Noise scale $\sigma$ & 0.05 & 0.05 & -- & -- & -- \\
Step size $\alpha$ & 1.0 & -- & -- & -- & -- \\
Short rollout fraction & 0.25 & -- & -- & 0.25 & -- \\
Accept margin $\tau$ & 0.0 & -- & -- & -- & -- \\
CRN-style repeated seed evaluation & enabled & disabled & -- & enabled & -- \\
Running mean/variance normalization & -- & -- & enabled & optional & -- \\
Learning rate (transform updates) & -- & -- & -- & $3\times 10^{-4}$ & -- \\
Gradient steps per episode & -- & -- & -- & 1 & -- \\
Discount $\gamma$ & -- & -- & -- & 0.99 & -- \\
Entropy coefficient & -- & -- & -- & 0.01 & -- \\
Gradient clipping & -- & -- & -- & 1.0 & -- \\
\bottomrule
\end{tabular*}
\end{table}

%% file: tables/setup_table_revised.tex
\begin{table}[t]
\centering \scriptsize
\caption{Task specifications, termination, and success criteria used in the final reported sweep. Each evaluation run uses $N{=}4$ robots, a horizon of $T{=}500$ steps per episode, and a fixed interaction budget of $B{=}200{,}000$ joint environment steps. The success criteria are deliberately non-trivial operational targets rather than one-event minima.}
\label{tab:setup}
\begin{tabular*}{\linewidth}{@{\extracolsep{\fill}}p{.2\linewidth}p{.2\linewidth}p{.2\linewidth}p{.2\linewidth}@{}}
\toprule
Task & Progress metric & Success criterion & Episode termination \\
\midrule
Warehouse logistics & delivery ratio $=\text{delivered}/2$ & delivered $\ge 2$ & success or $T{=}500$ steps \\
Search and rescue & rescue ratio $=\text{rescued}/2$ & rescued $\ge 2$ & success or $T{=}500$ steps \\
Collaborative mapping & coverage & coverage $\ge 0.75$ & success or $T{=}500$ steps \\
\bottomrule
\end{tabular*}
\end{table}

%% file: tables/reward_table.tex
\begin{table}[t]
\centering \scriptsize
\caption{Reward functions used in the three domains (team-level scalar reward). All event terms use counter deltas to avoid reward explosion. Symbols follow the environment implementations.}
\label{tab:reward}
\begin{tabular*}{\linewidth}{@{\extracolsep{\fill}}p{.1\linewidth}p{.8\linewidth}@{}}
\toprule
Task & Reward per step $r_t$ \\
\midrule
Warehouse & $r_t = -0.1 + 100\,\Delta\mathrm{delivered} + 1\,\Delta\mathrm{picked\_up} - 10\,\Delta\mathrm{collisions} + 0.01\sum_{i=1}^{N}(20-d_i)$, where $d_i$ is the distance from robot $i$ to the nearest relevant target (package if not carrying; drop-off otherwise). \\
Search \& Rescue & $r_t = -0.1 + 5\,\Delta\mathrm{found} + 20\,\Delta\mathrm{rescued} + 0.1\,\mathrm{health\_bonus} + 0.5\,\mathrm{coverage}$, where $\mathrm{coverage}$ is computed on a coarse grid and $\mathrm{health\_bonus}$ credits remaining victim health at rescue time. \\
Mapping & $r_t = -0.1 + 10\,\mathrm{coverage} + 50\,\Delta\mathrm{coverage} + 0.5\,\mathrm{spread}$, where $\mathrm{coverage}$ is the explored-cell ratio and $\mathrm{spread}$ is the mean distance to the team centroid. \\
\bottomrule
\end{tabular*}
\end{table}

%% file: tables/overhead_table_access_revised.tex
\begin{table}[t]
\centering \scriptsize
\caption{Runtime and scalar-feedback accounting for the final reported $B{=}200{,}000$-step sweep, aggregated over all environments, heterogeneity levels, and seeds. Runtime is measured wall-clock time per run on the evaluation hardware; throughput is median environment steps per second; the last column is an experiment-runner proxy that counts one 64-bit scalar team-return value per rollout. It should be interpreted as a low-bandwidth accounting convention rather than literal required deployment traffic for every baseline.}
\label{tab:overhead}
\resizebox{\linewidth}{!}{
\begin{tabular}{@{}lcccc@{}}
\toprule
Method & \shortstack{Mean \\episodes/run} & \shortstack{Median \\runtime (s)} & \shortstack{Median \\throughput (steps/s)} & \shortstack{Scalar-feedback\\proxy (bytes/run)} \\
\midrule
Shared Policy & 417.2 & 698.3 & 286.7 & 3338 \\
DC-Ada & 104.2 & 702.0 & 285.0 & 10693 \\
Random Perturb. & 425.0 & 727.7 & 275.1 & 3400 \\
Local Fine-Tune & 331.4 & 712.7 & 281.1 & 5302 \\
Obs. Norm. & 415.6 & 678.5 & 295.2 & 3325 \\
\bottomrule
\end{tabular}}
\end{table}

%% file: tables/results_table_access.tex
\begin{table*}[t]
\centering \scriptsize
\caption{Performance comparison across heterogeneity levels under a fixed budget of $B{=}200{,}000$ environment steps. Values are mean episode reward $\pm$ std over 5 seeds.}
\label{tab:results}
\begin{tabular*}{\linewidth}{@{\extracolsep{\fill}}llcccc@{}}
\toprule
\textbf{Environment} & \textbf{Method} & \textbf{H0} & \textbf{H1} & \textbf{H2} & \textbf{H3} \\
\midrule
\multirow{5}{*}{Mapping} & DC-Ada (Ours) & 4757.4 $\pm$ 86.8 & 4760.2 $\pm$ 96.2 & 4717.6 $\pm$ 62.2 & 4721.8 $\pm$ 55.3 \\
 & Local Fine-Tuning & 4763.7 $\pm$ 0.4 & 4765.4 $\pm$ 0.6 & 4758.2 $\pm$ 0.4 & 4781.3 $\pm$ 0.5 \\
 & Obs. Normalization & 4715.1 $\pm$ 0.9 & 4710.0 $\pm$ 1.5 & 4684.6 $\pm$ 2.0 & 4694.6 $\pm$ 0.8 \\
 & Random Perturbation & 4683.9 $\pm$ 51.3 & 4682.0 $\pm$ 53.0 & 4694.5 $\pm$ 49.0 & 4689.2 $\pm$ 58.1 \\
 & Shared Policy & \textbf{4770.5 $\pm$ 0.3} & \textbf{4773.0 $\pm$ 0.3} & \textbf{4765.2 $\pm$ 0.8} & \textbf{4784.0 $\pm$ 0.8} \\
\midrule
\multirow{5}{*}{Search \& Rescue} & DC-Ada (Ours) & 14.1 $\pm$ 2.0 & 13.5 $\pm$ 1.1 & 15.2 $\pm$ 2.2 & 15.2 $\pm$ 1.0 \\
 & Local Fine-Tuning & 14.4 $\pm$ 0.3 & 14.5 $\pm$ 0.2 & 14.6 $\pm$ 0.1 & 14.8 $\pm$ 0.1 \\
 & Obs. Normalization & \textbf{17.7 $\pm$ 0.2} & 16.2 $\pm$ 0.1 & \textbf{17.2 $\pm$ 0.1} & 15.9 $\pm$ 0.2 \\
 & Random Perturbation & 13.8 $\pm$ 1.9 & 13.7 $\pm$ 2.0 & 13.5 $\pm$ 2.2 & 13.6 $\pm$ 1.7 \\
 & Shared Policy & 16.0 $\pm$ 0.1 & \textbf{16.5 $\pm$ 0.0} & 15.7 $\pm$ 0.1 & \textbf{16.1 $\pm$ 0.1} \\
\midrule
\multirow{5}{*}{Warehouse} & DC-Ada (Ours) & 254.9 $\pm$ 1.5 & 254.1 $\pm$ 1.5 & 253.9 $\pm$ 2.0 & 253.0 $\pm$ 2.2 \\
 & Local Fine-Tuning & 253.8 $\pm$ 0.2 & 253.7 $\pm$ 0.1 & 252.4 $\pm$ 0.2 & 250.9 $\pm$ 0.2 \\
 & Obs. Normalization & \textbf{269.5 $\pm$ 0.1} & \textbf{270.8 $\pm$ 0.1} & \textbf{269.7 $\pm$ 0.2} & \textbf{268.9 $\pm$ 0.1} \\
 & Random Perturbation & 234.4 $\pm$ 3.7 & 235.0 $\pm$ 3.6 & 234.3 $\pm$ 3.3 & 234.0 $\pm$ 3.3 \\
 & Shared Policy & 254.7 $\pm$ 0.0 & 254.5 $\pm$ 0.0 & 253.5 $\pm$ 0.0 & 251.6 $\pm$ 0.0 \\
\bottomrule
\end{tabular*}
\end{table*}

%% file: tables/mapping_contrasts_table.tex
\begin{table}[t]
\centering \scriptsize
\caption{Selected seed-paired contrasts for collaborative mapping under severe heterogeneity (H3; $n{=}5$ seeds). Positive values favor the first method named in the contrast. ``pp'' denotes percentage points.}
\label{tab:mapping_contrasts}
\begin{tabular*}{\linewidth}{@{\extracolsep{\fill}}lll@{}}
\toprule
\textbf{Metric} & \textbf{Contrast} & \textbf{Mean $\Delta$ [95\% CI]} \\
\midrule
Reward & Shared Policy $-$ DC-Ada & $+62.2$ $[-15.1,\ 139.5]$ \\
Success rate & DC-Ada $-$ Shared Policy & $+8.38$ pp $[6.75,\ 10.02]$ pp \\
Coverage & DC-Ada $-$ Shared Policy & $+0.0190$ $[0.0159,\ 0.0220]$ \\
\bottomrule
\end{tabular*}
\end{table}

%% file: tables/progress_success_H3.tex
\begin{table}[t]
\centering \scriptsize
\caption{H3 progress and success under severe heterogeneity. Mean $\pm$ std over 5 seeds. Progress is delivery ratio (Warehouse), rescue ratio (Search \& Rescue), and coverage (Mapping). Success uses the thresholds in Table~\ref{tab:setup}.}
\label{tab:progress_h3}
\begin{tabular*}{\linewidth}{@{\extracolsep{\fill}}llcc@{}}
\toprule
\textbf{Environment} & \textbf{Method} & \textbf{Progress} & \textbf{Success (\%)} \\
\midrule
\multirow{5}{*}{Mapping} & DC-Ada (Ours) & 0.675 $\pm$ 0.002 & 12.4 $\pm$ 1.2 \\
     & Local Fine-Tuning & 0.658 $\pm$ 0.000 & 4.5 $\pm$ 0.2 \\
     & Obs. Normalization & 0.625 $\pm$ 0.000 & 0.0 $\pm$ 0.0 \\
     & Random Perturbation & 0.659 $\pm$ 0.005 & 6.2 $\pm$ 1.8 \\
     & Shared Policy & 0.656 $\pm$ 0.000 & 4.0 $\pm$ 0.0 \\
\midrule
\multirow{5}{*}{Search \& Rescue} & DC-Ada (Ours) & 0.302 $\pm$ 0.013 & 14.7 $\pm$ 1.1 \\
     & Local Fine-Tuning & 0.291 $\pm$ 0.001 & 11.9 $\pm$ 0.2 \\
     & Obs. Normalization & 0.290 $\pm$ 0.003 & 10.5 $\pm$ 0.3 \\
     & Random Perturbation & 0.311 $\pm$ 0.020 & 13.7 $\pm$ 1.7 \\
     & Shared Policy & 0.300 $\pm$ 0.001 & 12.5 $\pm$ 0.1 \\
\midrule
\multirow{5}{*}{Warehouse} & DC-Ada (Ours) & 0.055 $\pm$ 0.012 & 1.4 $\pm$ 0.5 \\
     & Local Fine-Tuning & 0.070 $\pm$ 0.000 & 1.2 $\pm$ 0.0 \\
     & Obs. Normalization & 0.079 $\pm$ 0.000 & 1.5 $\pm$ 0.0 \\
     & Random Perturbation & 0.085 $\pm$ 0.006 & 2.0 $\pm$ 0.3 \\
     & Shared Policy & 0.071 $\pm$ 0.000 & 1.2 $\pm$ 0.0 \\
\bottomrule
\end{tabular*}
\end{table}

%% file: tables/ablation_h3_results.tex
\begin{table*}[t]
\centering
\scriptsize
\caption{Targeted DC-Ada design ablations under severe heterogeneity (H3). All runs use the same frozen shared policy, five seeds, and a matched $200{,}000$-step budget. Mapping progress is mean coverage; search-and-rescue progress is mean rescue ratio. Values are mean $\pm$ standard deviation across seeds.}
\label{tab:ablation_h3}
\begin{tabular*}{\linewidth}{@{\extracolsep{\fill}}llcccc@{}}
\toprule
\textbf{Environment} & \textbf{Variant} & \textbf{Reward} & \textbf{Success (\%)} & \textbf{Progress} & \textbf{Episodes} \\
\midrule
\multirow{8}{*}{Mapping} & Shared policy & 4784.0 $\pm$ 0.9 & 4.0 $\pm$ 0.0 & 0.656 $\pm$ 0.000 & 403.0 $\pm$ 0.0 \\
 & Full DC-Ada & 4721.8 $\pm$ 61.9 & 12.4 $\pm$ 1.3 & 0.675 $\pm$ 0.003 & 102.0 $\pm$ 0.0 \\
 & No CRN & 4775.9 $\pm$ 9.7 & 5.9 $\pm$ 1.2 & 0.661 $\pm$ 0.003 & 102.0 $\pm$ 0.0 \\
 & $M{=}1$ & 4804.6 $\pm$ 49.9 & 5.8 $\pm$ 1.2 & 0.659 $\pm$ 0.002 & 243.6 $\pm$ 0.9 \\
 & $M{=}4$ & 4771.8 $\pm$ 77.9 & 9.2 $\pm$ 1.9 & 0.672 $\pm$ 0.002 & 153.4 $\pm$ 0.9 \\
 & Always-best & 4762.9 $\pm$ 100.3 & 9.6 $\pm$ 2.5 & 0.673 $\pm$ 0.004 & 102.0 $\pm$ 0.0 \\
 & Short $T_c$ & 4875.5 $\pm$ 62.4 & 7.5 $\pm$ 1.5 & 0.668 $\pm$ 0.004 & 186.4 $\pm$ 0.5 \\
 & Long $T_c$ & 4858.2 $\pm$ 28.9 & 8.4 $\pm$ 2.0 & 0.672 $\pm$ 0.003 & 59.8 $\pm$ 0.4 \\
\midrule
\multirow{8}{*}{Search--Rescue} & Shared policy & 16.09 $\pm$ 0.10 & 12.5 $\pm$ 0.1 & 0.300 $\pm$ 0.001 & 444.0 $\pm$ 0.0 \\
 & Full DC-Ada & 15.21 $\pm$ 1.08 & 14.7 $\pm$ 1.2 & 0.302 $\pm$ 0.014 & 108.8 $\pm$ 2.2 \\
 & No CRN & 12.59 $\pm$ 1.59 & 11.6 $\pm$ 1.9 & 0.271 $\pm$ 0.025 & 108.2 $\pm$ 0.4 \\
 & $M{=}1$ & 15.62 $\pm$ 0.32 & 13.9 $\pm$ 0.5 & 0.302 $\pm$ 0.005 & 265.8 $\pm$ 2.7 \\
 & $M{=}4$ & 15.82 $\pm$ 0.50 & 14.9 $\pm$ 0.9 & 0.310 $\pm$ 0.006 & 166.2 $\pm$ 3.5 \\
 & Always-best & 14.23 $\pm$ 2.83 & 14.4 $\pm$ 3.0 & 0.310 $\pm$ 0.033 & 110.4 $\pm$ 3.3 \\
 & Short $T_c$ & 14.59 $\pm$ 2.55 & 15.1 $\pm$ 1.6 & 0.298 $\pm$ 0.032 & 198.0 $\pm$ 2.5 \\
 & Long $T_c$ & 13.55 $\pm$ 1.91 & 11.0 $\pm$ 2.3 & 0.269 $\pm$ 0.023 & 63.6 $\pm$ 1.3 \\
\bottomrule
\end{tabular*}
\end{table*}

%% file: tables/robustness_h3_results.tex
\begin{table*}[t]
\centering
\scriptsize
\caption{Observation-stress evaluation under severe heterogeneity (H3). Each cell reports success rate (\%) / progress, where progress is coverage for mapping and rescue ratio for search-and-rescue. Values are means over five seeds under the matched $200{,}000$-step budget.}
\label{tab:robustness_h3}
\begin{tabular*}{\linewidth}{@{\extracolsep{\fill}}llcccc@{}}
\toprule
\textbf{Environment} & \textbf{Stress condition} & \textbf{Shared} & \textbf{DC-Ada} & \textbf{Local FT} & \textbf{Obs. norm.} \\
\midrule
\multirow{5}{*}{Mapping} & Clean & 4.0 / 0.656 & \textbf{12.4 / 0.675} & 4.5 / 0.658 & 0.0 / 0.625 \\
 & Gaussian noise & 4.0 / 0.656 & \textbf{11.4 / 0.675} & 4.5 / 0.658 & 0.0 / 0.633 \\
 & 10\% dropout & 2.5 / 0.652 & \textbf{10.6 / 0.673} & 2.8 / 0.653 & 0.0 / 0.627 \\
 & 2-step delay & 4.0 / 0.656 & \textbf{10.8 / 0.674} & 4.5 / 0.658 & 0.0 / 0.625 \\
 & Combined mild & 3.7 / 0.655 & \textbf{12.0 / 0.676} & 4.3 / 0.657 & 0.2 / 0.632 \\
\midrule
\multirow{5}{*}{Search--Rescue} & Clean & 12.5 / 0.300 & \textbf{14.7 / 0.302} & 11.9 / 0.291 & 10.5 / 0.290 \\
 & Gaussian noise & 12.5 / 0.300 & 13.5 / 0.299 & 12.3 / 0.294 & \textbf{13.6 / 0.310} \\
 & 10\% dropout & 13.2 / 0.305 & 12.5 / 0.280 & 11.8 / 0.292 & \textbf{13.3 / 0.306} \\
 & 2-step delay & 12.5 / 0.298 & \textbf{15.2 / 0.306} & 12.7 / 0.295 & 9.7 / 0.286 \\
 & Combined mild & 12.5 / 0.298 & 12.8 / 0.286 & 12.8 / 0.296 & \textbf{14.0 / 0.307} \\
\bottomrule
\end{tabular*}
\end{table*}

%% file: tables/extra_seed_h3_results_all_envs.tex
\begin{table*}[t]
\centering
\scriptsize
\caption{Extended-seed confirmation for clean H3 across all three environments. This table combines the original five clean H3 seeds with five additional seeds (10 seeds total) for the five main methods. Progress is delivery ratio for warehouse, rescue ratio for search--rescue, and coverage for mapping. The main all-environment tables retain the uniform five-seed H0--H3 sweep; this targeted check evaluates whether the severe-heterogeneity trends persist under additional seeds. Values are mean $\pm$ standard deviation across seeds.}
\label{tab:extra_seed_h3}
\begin{tabular*}{\linewidth}{@{\extracolsep{\fill}}llcccc@{}}
\toprule
\textbf{Environment} & \textbf{Method} & \textbf{n} & \textbf{Reward} & \textbf{Success (\%)} & \textbf{Progress} \\
\midrule
\multirow{5}{*}{Warehouse} & Shared policy & 10 & 251.5 $\pm$ 0.1 & 1.2 $\pm$ 0.0 & 0.071 $\pm$ 0.000 \\
 & DC-Ada (Ours) & 10 & 253.2 $\pm$ 1.7 & 1.4 $\pm$ 0.5 & 0.056 $\pm$ 0.011 \\
 & Random perturbation & 10 & 234.0 $\pm$ 3.0 & \textbf{2.0 $\pm$ 0.3} & \textbf{0.084 $\pm$ 0.005} \\
 & Local fine-tuning & 10 & 251.0 $\pm$ 0.3 & 1.3 $\pm$ 0.1 & 0.071 $\pm$ 0.002 \\
 & Obs. normalization & 10 & \textbf{268.9 $\pm$ 0.1} & 1.5 $\pm$ 0.0 & 0.079 $\pm$ 0.001 \\
\midrule
\multirow{5}{*}{Search--Rescue} & Shared policy & 10 & \textbf{16.05 $\pm$ 0.08} & 12.5 $\pm$ 0.1 & 0.299 $\pm$ 0.001 \\
 & DC-Ada (Ours) & 10 & 14.42 $\pm$ 1.36 & 13.4 $\pm$ 2.0 & 0.290 $\pm$ 0.021 \\
 & Random perturbation & 10 & 13.45 $\pm$ 1.51 & \textbf{13.8 $\pm$ 1.4} & \textbf{0.307 $\pm$ 0.016} \\
 & Local fine-tuning & 10 & 14.67 $\pm$ 0.11 & 11.8 $\pm$ 0.2 & 0.290 $\pm$ 0.002 \\
 & Obs. normalization & 10 & 15.94 $\pm$ 0.14 & 10.5 $\pm$ 0.3 & 0.291 $\pm$ 0.002 \\
\midrule
\multirow{5}{*}{Mapping} & Shared policy & 10 & \textbf{4783.0 $\pm$ 1.4} & 3.9 $\pm$ 0.1 & 0.656 $\pm$ 0.000 \\
 & DC-Ada (Ours) & 10 & 4736.6 $\pm$ 70.1 & \textbf{11.2 $\pm$ 2.4} & \textbf{0.675 $\pm$ 0.002} \\
 & Random perturbation & 10 & 4690.5 $\pm$ 51.6 & 6.0 $\pm$ 1.4 & 0.657 $\pm$ 0.005 \\
 & Local fine-tuning & 10 & 4779.7 $\pm$ 1.8 & 4.4 $\pm$ 0.2 & 0.658 $\pm$ 0.000 \\
 & Obs. normalization & 10 & 4694.6 $\pm$ 0.9 & 0.0 $\pm$ 0.0 & 0.624 $\pm$ 0.000 \\
\bottomrule
\end{tabular*}
\end{table*}